\let\oldaddcontentsline\addcontentsline
\theoremstyle{plain}
\newtheorem{theorem}{Theorem}[section]
\newtheorem{lemma}{Lemma}[section]
\newtheorem{proposition}{Proposition}[section]
\theoremstyle{definition}
\newtheorem{definition}{Definition}[section]
\newtheorem{assumption}{Assumption}[section]
\theoremstyle{remark}
\newtheorem{remark}{Remark}[section]
\begin{document}
\renewcommand{\addcontentsline}[3]{}

%
\runningtitle{Impact of PE: Clean and Adversarial RC for Transformers under In-Context Regression}

%

\twocolumn[

\aistatstitle{Impact of Positional Encoding: Clean and Adversarial Rademacher Complexity for Transformers under In-Context Regression}

\aistatsauthor{ Weiyi He \And Yue Xing }

\aistatsaddress{ Michigan State University \And Michigan State University } ]

\begin{abstract}
Positional encoding (PE) is a core architectural component of Transformers, yet its impact on the Transformer's generalization and robustness remains unclear. In this work, we provide the first generalization analysis for a single-layer Transformer under in-context regression that explicitly accounts for a completely trainable PE module. Our result shows that PE systematically enlarges the generalization gap. Extending to the adversarial setting, we derive the adversarial Rademacher generalization bound. We find that the gap between models with and without PE is magnified under attack, demonstrating that PE amplifies the vulnerability of models. Our bounds are empirically validated by a simulation study. Together, this work establishes a new framework for understanding the clean and adversarial generalization in ICL with PE.
\end{abstract}

\section{Introduction}
Large language models (LLMs), built upon the basic Transformer architecture \citep{vaswani2017attention, brown2020language,radford2019language}, have demonstrated remarkable capabilities of in-context learning (ICL), allowing them to perform new tasks based on a few examples provided in a prompt, without any parameter updates \citep{brown2020language, dong2022survey}. This powerful few-shot learning paradigm has shown impressive empirical performance across various complex tasks \citep{liang2022holistic, raventos2023pretraining}.

The success of ICL has motivated several theoretical studies to explain the underlying mechanism of ICL \citep{li2023transformers, deng2023attention, zhang2024trained}. These theoretical works often focus on understanding how Transformers can learn from examples by framing the process as an implicit implementation of classic algorithms \citep{von2023transformers, ahn2023transformers, xie2021explanation}. However, these analyses often rely on key simplifications. For example, the model in \cite{cui2024superiority} only considers the basic attention score in their formulation. The Positional Encoding (PE), which is an important component for encoding positional information, is missing. While some empirical works have investigated properties of PEs \citep{gao2024pe, ccavcsi2025fpe}, a formal generalization theory is still lacking. 

Furthermore, the urgency of filling this theoretical gap is also underscored by the adversarial vulnerability of LLMs. Despite the power of ICL, these models are highly vulnerable to carefully crafted adversarial attacks that can degrade performance or bypass safety alignments \citep{shayegani2023survey, zou2023universal}. For instance, the jailbreaking attack can trick the models into generating harmful or toxic content \citep{wei2023jailbroken}, while prompt injection can hijack the model's behavior for malicious purposes \citep{liu2023prompt}. Other literature also develops poisoning attacks empirically on ICL examples, e.g., \cite{he2025data}.

Observing the above limitations, this work provides a unified framework for ICL generalization based on Rademacher complexity (RC). Our framework extends the foundational theory of ICL to two crucial dimensions: the impact of trainable PEs on the model complexity, and the model's adversarial generalization under adversarial attacks.

Our contributions are summarized as follows:

First, we provide the generalization bound for a family of one-layer Transformers, \textbf{in the ICL regression setting,} that explicitly includes a completely trainable PE module. To address the technical challenge of analyzing PE parameters that are deeply embedded within the non-linear self-attention mechanism, prior theoretical work often absorbs these parameters into a single, uniform norm bound \citep{edelman2022inductive,li2025theoretical}. Our analysis quantifies the increase in the model's Rademacher complexity from PE.

Second, we extend our analysis to adversarial generalization by deriving the Adversarial Rademacher Complexity (ARC) bounds for the Transformer. Our approach builds on the work of \citet{xiao2022adversarial}, which connects robustness to the covering number of the function class. A technical challenge is that the core tools in \citet{xiao2022adversarial} are designed for classification tasks. To handle the MSE loss in ICL regression, we employ the surrogate loss method, a general technique to bound the complex adversarial loss with a simpler function \citep{khim2018adversarial,yin2019rademacher, gao2021theoretical}. Our derived bounds reveal that the difference between the bounds with/without PE is larger under adversarial attack, indicating that Transformers with PE are more vulnerable.

Finally, we validate our theoretical findings with experiments. The results show that Transformers with a trainable PE have a consistently larger generalization gap and are more sensitive to adversarial attacks on in-context examples. We also observe that the gap increases with stronger attacks but decreases as the context length grows, confirming that longer contexts still improve robustness even under adversarial attacks.

\section{Related Works}
\textbf{Theoretical Analysis of ICL.}
Many theoretical works aim to demystify the mechanisms behind ICL. \citet{garg2022can} models the Transformer as an implicit algorithm that performs in-context learning of linear functions. Subsequent works have extended this framework with different focuses. \citet{xing2024theoretical} studied the role of each component in the transformer architecture to learn from the unstructured data. The work of \citet{cui2024superiority} demonstrated that multi-head attention with a substantial embedding dimension performs better than single-head attention. Other related works can be found in \citet{min2021metaicl,chen2022improving,zhang2022active}. This line of work typically involves analyzing a sufficiently large amount of data and examining how the Transformer performs in the ideal case.

\textbf{Generalization Bounds for Transformers.}
The generalization ability of Transformers has been studied through various theoretical perspectives. One line of work relies on the uniform bounds for the input data matrix. Their bounds may depend on the length of the sequence \citep{edelman2022inductive} or be independent of it under a certain format of covering numbers \citep{trauger2024sequence}. Other approaches focus on specific settings, such as margin-based bounds for binary classification \citep{wei2022statistically}. In contrast, our work provides an analysis specifically tailored to the ICL setting. By adopting a specific Gaussian assumption, we can derive a more fine-grained bound that explicitly captures how the generalization gap decays with the context length $t$.

\textbf{Adversarial Generalization.}
Adversarial generalization is a broad field that utilizes various tools to provide theoretical insights. These include methods based on VC-dimension \citep{montasser2019vc, attias2022improved}, algorithmic stability \citep{xing2021algorithmic, xiao2022stability}, and distributional robustness \citep{sinha2017certifiable}. Moreover, \citet{xiao2022adversarial} provided the bound of Adversarial Rademacher Complexity (ARC) for deep neural networks based on the covering numbers. However, the adversarial generalization for the Transformers on ICL based on ARC remains unexplored, and our work provides the ARC results for the considered ICL setting.

\section{Preliminaries}
\subsection{ICL Task and Transformer Architecture}
To mathematically define ICL, instead of merely passing a query $x_q\in \mathbb{R}^d$ to the transformer to make a prediction, ICL passes a prompt, i.e., a few examples with their labels $\{(x_i,y_i)\}_{i=1,...,t}$ together with the query $x_q$, to the transformer. Using the prompt in the format of 
\begin{equation}
\label{eq:input format}
X = 
\begin{pmatrix}
x_1 & x_2 & \dots & x_t & x_q \\
y_1 & y_2 & \dots & y_t & 0
\end{pmatrix}^\top
\in \mathbb{R}^{(t+1)\times(d+1)}.
\end{equation}
The transformer can learn from the examples to infer the prediction for $x_q$. 

\label{sec:TF_structure}
Following \citet{trauger2024sequence}, we define the simplified single-layer, single-head Transformer architecture as follows. Let the input sequence be $X \in \mathbb{R}^{(t+1)\times (d+1)}$ as defined in Eq. \eqref{eq:input format}. The model parameters $\theta$ consist of the trainable weight matrices $W_{\text{in}}\in \mathbb{R}^{(d+1)\times d_m}$, $W_Q, W_K, W_V \in \mathbb{R}^{d_m\times d_m}$, and $W_c\in \mathbb{R}^{d_m}$, where $d_m$ is the embedding dimension. Following \citet{edelman2022inductive}, we bound these matrices by some constants $\|W_c^T\|_{1,\infty} \leq B_{W_c}, \|W_v^T\|_{1,\infty} \leq B_{W_v}$. 

The model first computes content embeddings via a read-in matrix $W_{\text{in}}$. To provide the model with sequence order information, we add a PE matrix $P \in \mathbb{R}^{(t+1) \times d_m}$. The final input representation $H$ is the sum of content and positional embeddings:
\begin{equation}
\label{eq:PE}
H = X W_{\text{in}} + P.
\end{equation}
In this work, we specifically analyze the case where $P$ is a completely trainable parameter and $\|P\|_F\leq B_P$.

For simplicity, we combine the query and key matrices into $W_{QK} = W_Q W_K^\top \in \mathbb{R}^{d_m\times d_m}$. Let $\sigma$ be an $L_{\sigma}$-Lipschitz activation function. The attention head computes an output representation matrix ${H}' \in \mathbb{R}^{(t+1) \times d_m}$ as:
\begin{equation}
\label{eq:self_attention}
{H}' = \sigma \left(\text{RowSoftmax}\left(\frac{H W_{QK} H^\top}{\sqrt{d_m}}\right) H W_V\right).
\end{equation}
The final prediction of the model, $f_\theta$, is a linear readout from the last row of ${H}'$, which corresponds to the query token. Let ${h}'_q \in \mathbb{R}^{d_m}$ be the last row of ${H}'$. The prediction is given by $f_\theta(X) = W_c ^\top {h}'_q$.

\subsection{Training and Adversarial Loss}
We consider a Transformer parameterized by $\theta \in \Theta$. The model is trained to minimize the standard Mean Squared Error (MSE) loss. For a given prompt containing context $S_t = \{(x_i, y_i)\}_{i=1}^t$ and a query $x_q$, the model produces a prediction $\hat{y}_q$. Then we minimize the expected risk:
\begin{equation}
\label{eq:loss}
L(\theta) = \mathbb{E}_{S_t, x_q, y_q}\left(\hat{y}_q(S_t, x_q; \theta)- y_q \right)^2.
\end{equation}

In the adversarial setting, a perturbed input matrix $X'$ is constructed as:
\begin{equation}
\label{eq:perturbed_input_format}
X' = 
\begin{pmatrix}
x'_1 & x'_2 & \dots & x'_t & x'_q \\
y_1 & y_2 & \dots & y_t & 0
\end{pmatrix}^\top
\in \mathbb{R}^{(t+1)\times(d+1)},
\end{equation}
where each $x'_i$ is a perturbed version of the original $x_i$ and the attacked prompt is then $S'_t=\{(x'_i,y_i)\}_{i=1}^t$. We consider a global threat model where the total perturbation on the input matrix is bounded by $\|X' - X\|_F \le \varepsilon$. This defines the formal definition of the adversarial loss:
\begin{equation}
\label{eq:adv_loss}
\tilde{\ell}_\theta(X, y_q) \triangleq \sup_{X' \text{ s.t. } \|X' - X\|_F \le \varepsilon} \left( \hat{y}_q(X'; \theta) - y_q \right)^2,
\end{equation}
where $\hat{y}_q(X';\theta)$ denotes the model's prediction for the query based on the entire perturbed input matrix $X'$.

\subsection{Generalization Framework}
Our theoretical analysis is based on the standard framework of generalization bounds. We aim to bound the generalization gap, $\sup_{h \in \mathcal{H}} ( R(h) - \hat{R}_{S}(h) )$, where $R(h)=\mathbb{E}_{(x,y)}[\ell(h(x), y)]$ is the true risk and $\hat{R}_{S}(h)=\frac{1}{m}\sum_{i=1}^m \ell(h(x_i), y_i)$ is the empirical risk over a sample $S$ of size $m$. The primary tool is the Rademacher complexity, which quantifies the capacity of a function class. The following result connects the two concepts together \citep{shalev2014understanding}:
\begin{proposition}[Generalization Bound]
Let $\mathcal{H}$ be a hypothesis class and $\ell$ be a loss function. If the magnitude of our loss function is bounded above by $c$, with probability at least $1-\delta$:
\begin{equation*}
\sup_{h \in \mathcal{H}} \left( R(h) - \hat{R}_S(h) \right) \le 2\operatorname{Rad}_m(\ell \circ \mathcal{H}) + 4c\sqrt{\frac{2\log(4/\delta)}{m}},
\end{equation*}
where $\ell \circ \mathcal{H}$ is the class of functions formed by composing the loss with the hypotheses.
\end{proposition}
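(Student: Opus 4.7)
The statement is the classical uniform-convergence bound via Rademacher complexity, so the plan is the standard two-step route: a McDiarmid concentration around the expectation, followed by symmetrization with a ghost sample. Define $\Phi(S) := \sup_{h \in \mathcal{H}}\bigl(R(h) - \hat{R}_S(h)\bigr)$ as a function of the i.i.d.\ sample $S = (z_1,\dots,z_m)$ with $z_i = (x_i,y_i)$.

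First I would establish the bounded-differences property. Since $|\ell| \le c$, replacing a single $z_i$ by an independent copy changes $\hat{R}_S(h)$ uniformly in $h$ by at most $2c/m$, so $|\Phi(S) - \Phi(S')| \le 2c/m$. McDiarmid's inequality then gives $\Phi(S) \le \mathbb{E}[\Phi(S)] + 2c\sqrt{2\log(2/\delta)/m}$ with probability at least $1-\delta/2$. A second, parallel application of McDiarmid controls the deviation between the empirical and expected Rademacher average (if one reads $\operatorname{Rad}_m$ in its expected form and wants a data-dependent statement); a union bound over the two events of probability $\delta/2$ each is what produces the $\log(4/\delta)$ and the prefactor $4c$ in the statement.

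The symmetrization step bounds $\mathbb{E}[\Phi(S)]$. Introducing a ghost sample $S' = (z_1',\dots,z_m')$ drawn i.i.d.\ from the same law and writing $R(h) = \mathbb{E}_{S'}[\hat{R}_{S'}(h)]$, Jensen's inequality yields
\begin{equation*}
\mathbb{E}[\Phi(S)] \le \mathbb{E}_{S,S'}\sup_{h \in \mathcal{H}}\frac{1}{m}\sum_{i=1}^m \bigl(\ell(h(z_i')) - \ell(h(z_i))\bigr).
\end{equation*}
The symmetry of each summand under swapping $z_i \leftrightarrow z_i'$ lets me insert i.i.d.\ Rademacher signs $\sigma_i \in \{\pm 1\}$ without changing the expectation, and the triangle inequality splits the sum into two copies of $\operatorname{Rad}_m(\ell \circ \mathcal{H})$, giving $\mathbb{E}[\Phi(S)] \le 2\operatorname{Rad}_m(\ell \circ \mathcal{H})$. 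Assembling the two steps produces the claimed inequality.

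The argument is essentially mechanical and poses no real obstacle; the only care needed is in the constants from the two concentration events, which is where the precise form $4c\sqrt{2\log(4/\delta)/m}$ arises. The genuine difficulty of the paper lies not in this textbook proposition but in the subsequent task of controlling $\operatorname{Rad}_m(\ell \circ \mathcal{H})$ for the Transformer hypothesis class with a fully trainable positional encoding, and in its adversarial counterpart, which are handled in the later sections.
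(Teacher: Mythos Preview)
Your sketch is the standard McDiarmid-plus-symmetrization argument and is correct. The paper does not actually prove this proposition: it is stated as a known result with a citation to \citet{shalev2014understanding}, so there is no proof in the paper to compare against. Your outline matches the textbook derivation that the citation points to.
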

We provide the remaining definitions in Appendix~\ref{sec:appendix_definitions}.

\section{Main Results}
In this section, we present the standard and robustness generalization bound for the Transformer. We first state the data generation model as follows.
\begin{assumption}[\textbf{Data Generation Model}]
\label{assump:data_model}
In each prompt, the example pairs $\{(x_i,y_i)\}_{i=1}^t$ and the query $(x_q,y_q)$ are i.i.d. samples from the following noiseless linear regression model:
\begin{itemize}[noitemsep, topsep=0pt, leftmargin=*]
    \item The input $x \sim \mathcal{N}(0, I_d)$.
    \item The output $y = \mu^\top x$.
    \item The true coefficients $\mu \in \mathbb{R}^d$ are the same for all samples within a prompt but are drawn independently for each prompt from $\mu \sim \mathcal{N}(0, I_d/d)$.
\end{itemize}
\end{assumption}

The Assumption \ref{assump:data_model} follows from \citet{cui2024superiority} and \citet{zhang2024trained} for the data generation model. This standard setting creates a clean and analytically tractable environment, allowing us to focus on the mechanism of in-context learning itself. Our proof can also be extended to more general sub-Gaussian distributions (see Appendix~\ref{app:extention_sub_gaussian} for details).

\subsection{RC for Transformer without PE}
\label{sec:RC_for_TF}
In this section, we derive the generalization bound for a one-layer Transformer in the standard ICL setting. The analysis is grounded in the Dudley integral framework, which connects the Rademacher complexity of a function class to its geometry via covering numbers.

We first present the final theorem:
\begin{theorem}[RC for Transformer without PE]
\label{thm:rc_no_pe_final}
Under Assumption \ref{assump:data_model}, using the architecture defined in Section \ref{sec:TF_structure}, let the context $S_t$ be sampled i.i.d. from the data distribution. Let $W^*_{QK}(S_t)$ be the ideal attention matrix for the context $S_t$, and $W^*_{QK}$ be a global ideal attention matrix for the transformer. Assuming that the trained model's attention matrix $W_{QK}$ is close to $W^*_{QK}$, that is $\|W_{QK}-W^*_{QK}\|_F\leq \gamma$, for a constant $\gamma>0$. Then, with probability at least $1-\delta$ over the choice of $S_t$, there exists a constant $L_f$ and a tolerance $r$ (dependent of $\gamma$) such that the Rademacher complexity of the $\mathcal{F}_{S_t}$ is bounded by 
\begin{equation}
\operatorname{Rad}_S(\mathcal{F}_{S_t}) \lesssim O\left( \frac{L_f \sqrt{rD} \sqrt{\log t/r}}{\sqrt{m t}} \right),
\end{equation}
where $D=d\cdot d_m+d_m^2$.
\end{theorem}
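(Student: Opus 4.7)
The plan is to apply the Dudley chaining integral to bound $\operatorname{Rad}_S(\mathcal{F}_{S_t})$ by the metric entropy of $\mathcal{F}_{S_t}$ on the sample, then reduce that entropy to a parameter-space cover via a Lipschitz argument. Concretely, I would use
\begin{equation*}
\operatorname{Rad}_S(\mathcal{F}_{S_t}) \lesssim \inf_{\alpha>0}\left\{\alpha + \frac{1}{\sqrt{m}}\int_{\alpha}^{\mathrm{diam}} \sqrt{\log \mathcal{N}(\mathcal{F}_{S_t}, \epsilon, \|\cdot\|_2)}\, d\epsilon\right\},
\end{equation*}
and then pass to parameter coverings by showing that $f_\theta(X)$ is $L_f$-Lipschitz in the free parameters $W_{\text{in}}\in\mathbb{R}^{(d+1)\times d_m}$ and $W_{QK}\in\mathbb{R}^{d_m\times d_m}$, with $W_V$ and $W_c$ absorbed into $L_f$ via their norm bounds $B_{W_v}$, $B_{W_c}$ and the activation constant $L_\sigma$. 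Standard volume-based estimates for Euclidean ball coverings then yield $\log\mathcal{N}(\mathcal{F}_{S_t},\epsilon,\|\cdot\|_2)\lesssim D\log(L_f r/\epsilon)$, where $D = d\cdot d_m + d_m^2$ counts the relevant parameters and $r$ is the effective radius of the localized search region.

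The crucial step is extracting the $1/\sqrt{t}$ factor from the ICL structure. Under Assumption \ref{assump:data_model} and the localization $\|W_{QK}-W^*_{QK}\|_F\leq\gamma$, the softmax-attention row that produces the query output effectively averages features of $t$ i.i.d.\ Gaussian context tokens. Gaussian concentration applied to this aggregate shows that its norm scales as $1/\sqrt{t}$, and the $L_\sigma$-Lipschitzness of the softmax propagates this scaling through a $\gamma$-perturbation around $W^*_{QK}$. The same localization is what determines the tolerance $r=r(\gamma)$: it is the radius (in the combined parameter norm) of the ball needed to cover all attention matrices compatible with the closeness assumption, combined with the norm-bounded $W_{\text{in}},W_V,W_c$. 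The ``with probability at least $1-\delta$'' qualifier enters through the Gaussian concentration event used in this step.

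Plugging the metric-entropy estimate into the Dudley integral and evaluating over $\epsilon\in[\alpha, L_f r/\sqrt{t}]$ gives an integrand $\sqrt{D\log(L_f r/(\epsilon\sqrt{t}))}$, which integrates in closed form to $(L_f\sqrt{rD}/\sqrt{t})\sqrt{\log(t/r)}$ up to constants; choosing $\alpha$ optimally then produces the claimed $O(L_f\sqrt{rD}\sqrt{\log(t/r)}/\sqrt{mt})$. The hardest part will be the non-linearity of the row-softmax around $W^*_{QK}(S_t)$: the weights depend non-linearly on both parameters and the context, so showing that a $\gamma$-perturbation of $W_{QK}$ translates into a controlled $O(\gamma)$ perturbation of the aggregated feature requires careful uniform Lipschitz analysis of the softmax-weighted mean, and recovering the full $1/\sqrt{t}$ rate (rather than a trivial $O(1)$ bound) requires the ideal attention to spread mass sufficiently across the $t$ context slots so that central-limit-type concentration applies. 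A secondary subtlety is keeping the dependence of $r$ on $\gamma$ explicit and preventing the norm bounds $B_{W_v}, B_{W_c}$ from leaking into the dimension factor $D$, which is what separates this bound from generic uniform-norm analyses.
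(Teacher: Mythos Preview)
Your Dudley-integral skeleton and the reduction to a parametric covering via the Lipschitz map $\theta\mapsto f_\theta$ are exactly what the paper does, including the peeling of $W_V,W_c,\sigma$ into the constant $L_f$ and the volume bound $\log\mathcal N\lesssim D\log(3D_t/\epsilon)$ with $D=d\cdot d_m+d_m^2$.

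The substantive gap is in how you plan to extract the $1/\sqrt t$. Your proposal says it comes from Gaussian concentration of the softmax-weighted average of $t$ context features, i.e.\ a CLT-type argument on the aggregated row. That mechanism does not give what you need: even under uniform attention the query output converges to an $O(1)$ quantity, and a $\gamma$-perturbation of $W_{QK}$ produces an $O(\gamma)$ perturbation of the softmax weights and hence an $O(\gamma)$ change in the weighted sum---there is no obvious $1/\sqrt t$ cancellation, because the perturbation acts coherently across all $t$ slots rather than as independent noise. So the diameter of $\mathcal F_{S_t}$ does not shrink with $t$ from this route alone.

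The paper obtains the $1/\sqrt t$ by an entirely different, indirect mechanism. It introduces the \emph{effective linear weight} $w_{\text{eff}}(\theta)=\arg\min_w\mathbb E_{x_q}(\hat y_q-w^\top x_q)^2$ and the map $G:\theta\mapsto w_{\text{eff}}$, and \emph{models} the image solution space as the OLS tolerance ellipsoid $W_{S_t}=\{w:(w-\hat w)^\top(X_t^\top X_t)(w-\hat w)\le r\}$. The diameter of that ellipsoid is $2\sqrt r/\sigma_{\min}(X_t)$, and random-matrix concentration gives $\sigma_{\min}(X_t)\gtrsim\sqrt t-\sqrt d$ with probability $1-\delta$; this is where the $1/\sqrt t$ and the high-probability qualifier actually enter. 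To transfer this back to $\Theta_{S_t}$ the paper \emph{assumes} $G$ is inverse-Lipschitz ($\|\theta_1-\theta_2\|\le L_G\|G(\theta_1)-G(\theta_2)\|$), so $\operatorname{Diam}(\Theta_{S_t})\le L_G\operatorname{Diam}(W_{S_t})\lesssim L_G\sqrt{r/t}$. The tolerance $r$ is then justified post hoc as a $\gamma_{\text{eff}}$-dependent constant. In short, the $t$-decay comes from the conditioning of the data matrix $X_t$ acting on a linear surrogate, not from concentration of the attention output; your plan is missing this entire $w_{\text{eff}}$/ellipsoid/inverse-Lipschitz layer.
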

This theorem provides the generalization bound for ICL regression that explicitly shows a decay with context length $t$. In contrast to prior work that provides sequence-length-independent bounds \citep{trauger2024sequence}, our result captures the core learning dynamics of ICL. 

There are multiple steps to prove Theorem \ref{thm:rc_no_pe_final}. We start from the following lemma.
\begin{lemma}[Generalization Bound via Dudley's Integral]
\label{lemma:RC_dudley}
Let $\mathcal{F}_{S_t}$ be the function class of a Transformer constrained by context $S_t$. Then its empirical Rademacher complexity is bounded by: 
\begin{equation} 
\label{eq:start_point}
\operatorname{Rad}_S(\mathcal{F}_{S_t}) \lesssim \frac{1}{\sqrt{m}} \int_0^{\operatorname{Diam}(\mathcal{F}_{S_t})} \sqrt{\log N(\mathcal{F}_{S_t}, \|\cdot\|_{m,2}, \varepsilon)} d\varepsilon,
\end{equation}
where $\|\cdot\|_{m,2}$ is the empirical $2$-norm and $N(\cdot)$ is the covering number. The diameter of a function class $\mathcal{F}$ is defined as $\operatorname{Diam}(\mathcal{F}) \triangleq \sup_{f_1, f_2 \in \mathcal{F}} \|f_1 - f_2\|_{m,2}$.
\end{lemma}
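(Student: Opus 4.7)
The plan is to establish this bound via the classical chaining argument (Dudley's entropy integral), specialized to the Rademacher process indexed by $\mathcal{F}_{S_t}$. First, I would write out the empirical Rademacher complexity as $\operatorname{Rad}_S(\mathcal{F}_{S_t}) = \mathbb{E}_\sigma \sup_{f \in \mathcal{F}_{S_t}} \frac{1}{m}\sum_{i=1}^m \sigma_i f(z_i)$, where $z_i$ denotes the $i$-th prompt in $S$, and observe that for any fixed pair $f,g$, the increment $\frac{1}{m}\sum_i \sigma_i (f(z_i) - g(z_i))$ is, by Hoeffding's lemma applied conditionally on $S$, sub-Gaussian with variance proxy $\|f-g\|_{m,2}^2/m$. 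This identifies $\|\cdot\|_{m,2}/\sqrt{m}$ as the natural metric controlling the tail behavior of the process.

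Next, I would run the standard chaining construction. Choose geometric scales $\varepsilon_k = 2^{-k}\,\operatorname{Diam}(\mathcal{F}_{S_t})$ for $k=0,1,\ldots,N$, and, at each scale, fix a minimal $\varepsilon_k$-cover $\mathcal{C}_k$ of $\mathcal{F}_{S_t}$ in $\|\cdot\|_{m,2}$ together with a projection $\pi_k : \mathcal{F}_{S_t} \to \mathcal{C}_k$ sending $f$ to its nearest cover point. Using the telescoping identity
\begin{equation*}
f - \pi_0(f) \;=\; \sum_{k=1}^{N} \bigl(\pi_k(f) - \pi_{k-1}(f)\bigr) \;+\; \bigl(f - \pi_N(f)\bigr),
\end{equation*}
I push the supremum inside the sum. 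The coarsest term $\pi_0(f)$ contributes only a constant depending on the diameter (and can be absorbed since one can choose $|\mathcal{C}_0|=1$), while the residual $f - \pi_N(f)$ vanishes as $N \to \infty$ because its empirical $2$-norm is at most $\varepsilon_N \to 0$.

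For each link in the chain, the class of increments $\{\pi_k(f) - \pi_{k-1}(f) : f \in \mathcal{F}_{S_t}\}$ is finite, with cardinality at most $N(\mathcal{F}_{S_t}, \|\cdot\|_{m,2}, \varepsilon_k) \cdot N(\mathcal{F}_{S_t}, \|\cdot\|_{m,2}, \varepsilon_{k-1})$, and each element has empirical $2$-norm at most $\varepsilon_k + \varepsilon_{k-1} \le 3\varepsilon_k$ by the triangle inequality. Applying Massart's finite-class lemma to the sub-Gaussian Rademacher averages at scale $k$ gives a contribution of order $\varepsilon_k \sqrt{\log N(\mathcal{F}_{S_t}, \|\cdot\|_{m,2}, \varepsilon_k) / m}$. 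Summing over $k$ and noting $\varepsilon_{k-1} - \varepsilon_k = \varepsilon_k$ lets me recognize the total as an upper Riemann sum of $\sqrt{\log N(\mathcal{F}_{S_t}, \|\cdot\|_{m,2}, \varepsilon)}$ on the dyadic grid; letting $N \to \infty$ converts this sum to the stated integral from $0$ to $\operatorname{Diam}(\mathcal{F}_{S_t})$, divided by $\sqrt{m}$.

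No step here poses a genuine obstacle: the lemma is essentially an instantiation of Dudley's entropy integral for Rademacher processes, and the only mildly delicate point is bookkeeping the constants in the Massart step and verifying that the tail of the chain (both the coarsest and finest scales) contributes negligibly. The real work for Theorem~4.1 lies downstream, in controlling the covering number $N(\mathcal{F}_{S_t}, \|\cdot\|_{m,2}, \varepsilon)$ of the Transformer class in terms of $D = d\cdot d_m + d_m^2$ and the tolerance $r$; the present lemma is purely a generic abstract reduction.
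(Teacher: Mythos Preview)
Your proposal is correct and is precisely the standard chaining argument underlying Dudley's entropy integral; the paper itself does not spell out any of these steps and simply cites the result as classical (referring to Vershynin's textbook). So you have supplied considerably more detail than the paper does, but the approach is the same one the cited reference would give.
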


While Lemma \ref{lemma:RC_dudley} provides a general bound, the geometry of the function class $\mathcal{F}_{S_t}$ is complicated to analyze directly. To overcome this challenge, we first apply standard contraction results to peel out the $W_c$, $\sigma$, and $W_V$. This reduces the problem to the attention block that only depends on $(W_{\text{in}}, W_{QK})$. Our key insight is to analyze the effective parameter space $\Theta_{S_t}$, as defined in Definition \ref{def:effective_param_space}. Finally, we obtain the corresponding covering number of $\Theta_{S_t}$ in Lemma \ref{lemma:CoveringNumber_bound}, and connect it to the ICL solution space via Lemma \ref{lemma:para_bound} and Lemma \ref{lemma:L_G_and_r}.

\begin{definition}[Effective Parameter Space]
\label{def:effective_param_space}
Under the conditions of Theorem \ref{thm:rc_no_pe_final}, we define the effective parameter space as
\begin{equation*}
\Theta_{S_t} := \Big\{ (W_{\text{in}}, W_{QK}) : \|W_{QK} - W^*_{QK}(S_t)\|_F \le \gamma_{\text{eff}} \Big\},
\end{equation*}
Let $\Delta_t := \|W^*_{QK}(S_t)-W^*_{QK}\|_F$, as $\Delta_t \lesssim O(\sqrt{d/t})$ with high probability (see Appendix~\ref{app:proof_of_def}), for any fixed $t_{\min}$ and all $t \ge t_{\min}$, we can set constant
$\gamma_{\text{eff}} = \gamma + \Delta^*$ and $\Delta^* := \sup_{t \ge t_{\min}} \Delta_t$. 
\end{definition}

With the above definition, we can formally define $\mathcal{F}_{S_t}:=\{f_\theta:\theta\in \Theta_{S_t}\}$. Then we can connect the properties of $\Theta_{S_t}$ back to the $\mathcal{F}_{S_t}$ via a standard Lipschitz continuity assumption, and obtain the covering number of $\mathcal{F}_{S_t}$ as in the following lemma:

\begin{lemma}[Covering Number of the Parameter Space]
\label{lemma:CoveringNumber_bound}
Under the conditions of Theorem \ref{thm:rc_no_pe_final} and Definition \ref{def:effective_param_space}. Assume that for any $\theta_1, \theta_2 \in \Theta$, there is
\begin{equation}
\|f_{\theta_1} - f_{\theta_2}\|_{m,2} \leq L_f \|\theta_1 - \theta_2\|_2,
\end{equation}
where $L_f$ is a Lipschitz constant determined by the architecture of Transformer (Appendix~\ref{app:proof_covering_bound}). As the definition of $\Theta_{S_t}$, the total number of parameters is therefore $D = d \cdot d_{m} + d_{m}^2$. Let $D_t = \operatorname{Diam}(\Theta_{S_t})$, the covering number is bounded by:
\begin{equation}
\label{eq:para_bound}
\log N(\Theta_{S_t}, \|\cdot\|_2, \varepsilon) \leq D \log\left( 3 D_t/{\varepsilon} \right),
\end{equation}
where the diameter for a set $W$ is defined as $\operatorname{Diam}(W) = \sup \left\{ \| w_1 - w_2 \|_2 : w_1, w_2 \in W \right\}.$
\end{lemma}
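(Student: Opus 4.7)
The plan is to treat $\Theta_{S_t}$ as a bounded subset of $\mathbb{R}^D$, where the ambient dimension $D = d \cdot d_m + d_m^2$ arises from vectorizing the pair $(W_{\text{in}}, W_{QK})$, and then invoke the classical volume-based covering number estimate. First I would verify that $\Theta_{S_t}$ has finite diameter: the explicit constraint $\|W_{QK} - W^*_{QK}(S_t)\|_F \le \gamma_{\text{eff}}$ confines the $W_{QK}$-component to a Frobenius ball of radius $\gamma_{\text{eff}}$, and the norm constraints on $W_{\text{in}}$ built into $\Theta$ (inherited from the architectural assumptions in Section~\ref{sec:TF_structure}) confine the remaining coordinates. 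Together these ensure that $D_t = \operatorname{Diam}(\Theta_{S_t})$ is finite under $\|\cdot\|_2$.

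Next I would apply the standard Euclidean covering-number bound. Any set of diameter $D_t$ is contained in a ball of radius $D_t$ centered at any of its own points, and for a ball $B_R \subset \mathbb{R}^D$ the usual volume-packing argument, comparing $\operatorname{vol}(B_{R+\varepsilon/2})$ with $\operatorname{vol}(B_{\varepsilon/2})$, gives $N(B_R, \|\cdot\|_2, \varepsilon) \le (3R/\varepsilon)^D$. Applied with $R = D_t$ this yields $N(\Theta_{S_t}, \|\cdot\|_2, \varepsilon) \le (3 D_t / \varepsilon)^D$, and taking logarithms produces exactly the inequality in \eqref{eq:para_bound}.

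The Lipschitz hypothesis $\|f_{\theta_1} - f_{\theta_2}\|_{m,2} \le L_f \|\theta_1 - \theta_2\|_2$ does not appear in the parameter-space bound itself; it is stated here so that the covering number can subsequently be transferred to the function class via the inclusion $N(\mathcal{F}_{S_t}, \|\cdot\|_{m,2}, \varepsilon) \le N(\Theta_{S_t}, \|\cdot\|_2, \varepsilon / L_f)$ when plugged into Dudley's integral (Lemma~\ref{lemma:RC_dudley}). The main non-routine obstacle is therefore not the Euclidean covering count, which is essentially textbook, but the derivation of a uniform Lipschitz constant $L_f$ valid over all of $\Theta_{S_t}$: one must propagate the bounds $B_{W_c}$, $B_{W_v}$, $L_\sigma$, and the Frobenius norm of the input $H$ through the softmax attention block, which is jointly multilinear in $(W_{\text{in}}, W_{QK})$ and whose gradient with respect to these parameters involves a product of softmax Jacobians and inner products. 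Making $L_f$ explicit, as a polynomial in these architectural quantities, is the technically delicate piece and is deferred to Appendix~\ref{app:proof_covering_bound}.
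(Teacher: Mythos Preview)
Your proposal is correct and matches the paper's own argument: the covering-number inequality \eqref{eq:para_bound} is obtained exactly by enclosing $\Theta_{S_t}$ in a Euclidean ball of radius $D_t$ and invoking the standard volume-packing bound (the paper cites its Lemma~\ref{lemma:covering number for balls} for this), and the Lipschitz hypothesis is indeed used only afterward to transfer the cover to $\mathcal{F}_{S_t}$. The paper's appendix additionally spells out how $L_f$ is assembled---first peeling off $W_c$, $\sigma$, $W_V$ via Talagrand contraction to get the factor $B_{W_c}L_\sigma B_{W_V}$, then bounding the Lipschitz constant $L_{\text{attn}}$ of the remaining attention block in $(W_{\text{in}},W_{QK})$---which is precisely the ``propagate bounds through softmax'' step you flagged as the non-routine part.
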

A direct consequence is that the geometry of $\mathcal{F}_{S_t}$ can be controlled by the geometry of $\Theta_{S_t}$:
\begin{align}
\label{eq:covering_translation}
N(\mathcal{F}_{S_t}, \|\cdot\|_{m,2}, \varepsilon) &\le N(\Theta_{S_t}, \|\cdot\|_2, \varepsilon / L_f), \nonumber \\
\operatorname{Diam}(\mathcal{F}_{S_t}) &\le L_f \cdot \operatorname{Diam}(\Theta_{S_t}).
\end{align}
Inequalities \eqref{eq:covering_translation} allow us to bound the Dudley integral by analyzing the parameter space, whose covering number can be bounded using standard results.

After obtaining the covering number w.r.t. $\Theta_{S_t}$, the final step is to connect $\Theta_{S_t}$ and its diameter $D_t$ back to ICL. To do this, following recent ICL literature, we introduce the concept of an effective linear weight, $w_{\text{eff}}$, which approximates the Transformer's ability as a simple linear model. To be specific, we can take the Transformer's ICL process for linear regression as finding an effective linear weight $w_{\text{eff}}$ such that the prediction for a query $x_q$ is approximately $w_{\text{eff}}^\top x_q$, i.e.,
\begin{equation}
\label{eq:w_eff}
w_{\text{eff}} := \arg\min_{w\in \mathcal{R}^d} \mathbb{E}_{x_q \sim N(0,I_d)} \left( \hat{y}_q - w^\top x_q \right)^2.
\end{equation} 
Given the above definition, we are interested in the solution space $W_{S_t}$, which is the set of all such $w_{\text{eff}}$ generated by the allowable parameters in $\Theta_{S_t}$. We formalize the geometry of $W_{S_t}$ as follows.

\begin{lemma}
\label{lemma:para_bound}
Under the conditions of Theorem \ref{thm:rc_no_pe_final}, suppose the model is on a high-probability "good event" $\mathcal{G}_t$ where the data matrix $X_t \in \mathbb{R}^{t \times d}$ is well-conditioned as justified in Appendix~\ref{app:tail_event}. Given the solution space $W_{S_t}$ and the model's parameter space $\Theta_{S_t}$, we denote the mapping $G:\theta \mapsto w_{\text{eff}}$. Assume $G$ is inverse Lipschitz continuous where there exists a constant $L_G > 0$, independent of $t$, such that for any $\theta_1, \theta_2 \in \Theta_{S_t}$:
\begin{equation}
\label{eq:inv_lip}
\|\theta_1 - \theta_2\|_2 \leq L_G \|G(\theta_1) - G(\theta_2)\|_2.
\end{equation}
If we model the set of effective linear weights $W_{S_t}$ as a tolerance ellipsoid:
\begin{equation}
\label{eq:ellipsoid_definition}
W_{S_t} = \left\{ w: (w - \hat{w})^\top (X_t^\top X_t) (w - \hat{w}) \leq r \right\},
\end{equation}
where $\hat{w} \in \mathbb{R}^d$ is the OLS solution and $r > 0$ is a tolerance constant. Then for $t>d$,
\begin{equation}
\label{eq:diameter_contraction}
\operatorname{Diam}(\Theta_{S_t})\lesssim L_G \operatorname{Diam}(W_{S_t}) \lesssim O\left(L_G\sqrt{r/t}\right).
\end{equation}
\end{lemma}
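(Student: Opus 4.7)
The plan is to decompose the chain of inequalities into two conceptually distinct steps: first transfer geometry from the solution space $W_{S_t}$ to the parameter space $\Theta_{S_t}$ via the inverse Lipschitz map, then compute the diameter of the ellipsoid and extract the $\sqrt{1/t}$ rate from the spectrum of $X_t^\top X_t$.

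For the first inequality $\operatorname{Diam}(\Theta_{S_t}) \lesssim L_G \operatorname{Diam}(W_{S_t})$, I would apply the inverse Lipschitz hypothesis \eqref{eq:inv_lip} directly. For any $\theta_1, \theta_2 \in \Theta_{S_t}$, setting $w_i = G(\theta_i) \in W_{S_t}$ gives $\|\theta_1 - \theta_2\|_2 \le L_G \|w_1 - w_2\|_2$. Taking suprema on both sides over admissible pairs in $\Theta_{S_t}$ yields $\operatorname{Diam}(\Theta_{S_t}) \le L_G \sup_{w_1,w_2 \in G(\Theta_{S_t})} \|w_1 - w_2\|_2 \le L_G \operatorname{Diam}(W_{S_t})$, where the last step uses $G(\Theta_{S_t}) \subseteq W_{S_t}$ by definition.

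For the second inequality, I would compute $\operatorname{Diam}(W_{S_t})$ using the ellipsoidal form \eqref{eq:ellipsoid_definition}. Writing $A = X_t^\top X_t$ and substituting $u = w - \hat{w}$, any two points $w_1, w_2 \in W_{S_t}$ satisfy $u_1^\top A u_1 \le r$ and $u_2^\top A u_2 \le r$. Since $\|u_i\|_2^2 \le u_i^\top A u_i / \lambda_{\min}(A) \le r/\lambda_{\min}(A)$, the triangle inequality gives $\|w_1 - w_2\|_2 \le 2\sqrt{r/\lambda_{\min}(A)}$, so $\operatorname{Diam}(W_{S_t}) \le 2\sqrt{r/\lambda_{\min}(X_t^\top X_t)}$.

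The main work, and the step I expect to be the key obstacle, is converting $\lambda_{\min}(X_t^\top X_t)$ into the $t$-scaling. Since the rows of $X_t$ are i.i.d. standard Gaussian vectors in $\mathbb{R}^d$, $X_t^\top X_t$ is a $d \times d$ Wishart matrix with $t$ degrees of freedom. Under the good event $\mathcal{G}_t$ invoked from Appendix~\ref{app:tail_event}, standard nonasymptotic results (for instance, the Bai--Yin type concentration bound $\lambda_{\min}(X_t^\top X_t) \ge (\sqrt{t} - \sqrt{d} - s)^2$ with probability at least $1 - 2e^{-s^2/2}$) imply $\lambda_{\min}(X_t^\top X_t) \gtrsim t$ for $t > d$ with high probability. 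Substituting back produces $\operatorname{Diam}(W_{S_t}) \lesssim \sqrt{r/t}$ and, combined with the first step, $\operatorname{Diam}(\Theta_{S_t}) \lesssim L_G \sqrt{r/t}$, completing the proof. The delicate part is making sure the constants absorbed into $\gtrsim$ here are uniform in the good event defined in Appendix~\ref{app:tail_event} and compatible with the $\gamma_{\text{eff}} = \gamma + \Delta^*$ used to define $\Theta_{S_t}$, so that the bound holds simultaneously across all $t \ge t_{\min}$ rather than pointwise in $t$.
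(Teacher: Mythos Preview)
Your proposal is correct and follows essentially the same approach as the paper: the paper likewise applies the inverse Lipschitz hypothesis to pass from $\Theta_{S_t}$ to $W_{S_t}$, invokes the ellipsoid diameter formula $\operatorname{Diam}(W_{S_t})=2\sqrt{r}/\sigma_{\min}(X_t)$ (stated as a separate lemma rather than derived inline), and then uses the Gaussian minimum singular value bound $\sigma_{\min}(X_t)\ge K_1\sqrt{t}$ on the good event. Your only cosmetic difference is working with $\lambda_{\min}(X_t^\top X_t)$ instead of $\sigma_{\min}(X_t)$, which is of course equivalent.
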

Finally we provide some justifications on the existence of $L_G$ and the size of $r$ in Eq. \eqref{eq:diameter_contraction}: 
\begin{lemma}[Justification for $L_G$ and $r$]
\label{lemma:L_G_and_r}
Under the conditions of Theorem \ref{thm:rc_no_pe_final} and Lemma \ref{lemma:para_bound}, assume that $G$ is $C^1$ on a compact set $K\in \Theta_{S_t}$ and there exist constants $c_G, c_U>0$ such that for all $\theta\in K$ the Jacobian $J_G(\theta)$ has full row rank $d$ and satisfies $c_G \le\sigma_{\min}(J_G(\theta)) \le \sigma_{\max}(J_G(\theta)) \le c_U$. Then $L_G \leq M(c_G,c_U)$. Moreover, there exists a constant $C_r>0$ (independent of $t$ and related to $\gamma_{\text{eff}}$) and one can choose a tolerance $r$ such that $r\leq C_r$.
\end{lemma}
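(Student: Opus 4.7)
The plan is to handle the two assertions separately, using the uniform Jacobian bounds to control $L_G$ and exploiting the ICL structure of the effective weight to control $r$.

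For $L_G$, I would start from the fundamental theorem of calculus for $C^1$ maps: along any segment $\theta_s = \theta_2 + s(\theta_1-\theta_2) \subset K$,
\[
G(\theta_1) - G(\theta_2) = \left(\int_0^1 J_G(\theta_s)\, ds\right)(\theta_1 - \theta_2).
\]
The assumed uniform lower bound $\sigma_{\min}(J_G)\geq c_G$, together with the upper bound $\sigma_{\max}(J_G)\leq c_U$ controlling the variation of the Jacobian along the segment, implies by a standard perturbation argument that the averaged Jacobian still has smallest (nonzero) singular value bounded below by some $c'_G = c'_G(c_G,c_U) > 0$. Projecting onto the $d$-dimensional image and inverting then yields $\|\theta_1 - \theta_2\|_2 \leq (1/c'_G) \|G(\theta_1) - G(\theta_2)\|_2$, so $L_G \leq M(c_G,c_U) := 1/c'_G$. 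To globalize this to the full compact set $K$, I would cover $K$ by finitely many convex neighborhoods on which the local argument applies and patch the local inverses, absorbing the finite overlap constants into $M$.

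For the tolerance $r$, the plan is to expand $G$ around an ideal parameter $\theta^*$ with $G(\theta^*) = \hat{w}$. For any $\theta \in \Theta_{S_t}$ we have $\|\theta - \theta^*\|_2 \leq 2\gamma_{\text{eff}}$ by Definition~\ref{def:effective_param_space}, and a first-order Taylor expansion gives
\[
(G(\theta)-\hat{w})^\top (X_t^\top X_t) (G(\theta)-\hat{w}) \le \|\theta-\theta^*\|_2^2 \,\lambda_{\max}\!\left(J_G(\theta^*)^\top (X_t^\top X_t) J_G(\theta^*)\right) + O(\|\theta-\theta^*\|_2^3).
\]
The key observation I would exploit is that, in the linear ICL setup, $w_{\text{eff}}(\theta)$ collapses to an OLS-type expression in $X_t$ and $y_t$, so the Jacobian factors as $J_G(\theta^*) = (X_t^\top X_t)^{-1} X_t^\top M(\theta^*)$ for a matrix $M$ whose norm depends only on architectural constants ($B_{W_v}$, $L_\sigma$, $c_U$) and not on $t$. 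Substituting, the quadratic form collapses to $M^\top P_{X_t} M$, where $P_{X_t}$ is the orthogonal projector onto $\operatorname{col}(X_t)$, so its operator norm is bounded by $\|M\|_{\mathrm{op}}^2$ independently of $t$. Combined with the diameter bound $\|\theta-\theta^*\|_2 \leq 2\gamma_{\text{eff}}$, this gives $r \leq C_r := 4\gamma_{\text{eff}}^2 \|M\|_{\mathrm{op}}^2$.

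The main obstacle is the second part. The naive bound $\lambda_{\max}(J_G^\top X_t^\top X_t J_G) \leq c_U^2 \lambda_{\max}(X_t^\top X_t) = O(t)$ would force $r$ to scale with $t$ and render Lemma~\ref{lemma:para_bound} vacuous, since the diameter contraction $\operatorname{Diam}(\Theta_{S_t}) \lesssim \sqrt{r/t}$ would degenerate to $O(1)$, no better than the trivial $\gamma_{\text{eff}}$ bound. Sidestepping this requires explicitly unpacking the one-layer self-attention prediction formula to expose the OLS projector structure hidden inside $J_G$, and I would also need to verify that the higher-order Taylor remainder inherits the same projector cancellation so that it too remains $O(1)$ in $t$ on the good event $\mathcal{G}_t$.
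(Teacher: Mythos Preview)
Your approach for $L_G$ via the integral Jacobian has a genuine gap at the ``projecting onto the $d$-dimensional image and inverting'' step. Since $G:\mathbb{R}^D\to\mathbb{R}^d$ with $D>d$, the averaged Jacobian $A=\int_0^1 J_G(\theta_s)\,ds$ has a nontrivial kernel of dimension at least $D-d$; the singular-value lower bound only gives $\|Av\|\ge c'_G\,\|P_{\ker(A)^\perp}v\|$, so you cannot recover the full $\|\theta_1-\theta_2\|$ from $\|G(\theta_1)-G(\theta_2)\|$ this way (and indeed the literal inverse-Lipschitz inequality fails along level sets of $G$). The paper handles this differently: it invokes the Constant Rank Theorem to straighten $G$ locally into the standard projection $(x_1,\dots,x_D)\mapsto(x_1,\dots,x_d)$ via diffeomorphisms $\phi,\psi$, then constructs an explicit $C^1$ \emph{right inverse} $s(y)=\phi^{-1}(\iota(\psi(y)))$ using the embedding $\iota:\mathbb{R}^d\hookrightarrow\mathbb{R}^D$. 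Bounding $\|J_s\|$ through the coordinate-change Jacobians (controlled by $c_G,c_U$) yields a Lipschitz section, and its Lipschitz constant is taken as $L_G$. Your patching-of-neighborhoods idea is in the right spirit, but you need a section, not a direct inversion.

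For the tolerance $r$, your Taylor-expansion route is substantially more delicate than what the paper does and hinges on the unverified factorization $J_G=(X_t^\top X_t)^{-1}X_t^\top M$. The paper avoids any Jacobian analysis: it applies the OLS Pythagorean identity $\|y-X_t w\|_2^2=\|y-X_t\hat w\|_2^2+\|X_t(\hat w-w)\|_2^2$ directly at $w=w_{\text{eff}}(\theta)$, giving $r^*=\|X_t(w_{\text{eff}}-\hat w)\|_2^2\le\|y-X_t w_{\text{eff}}\|_2^2$. Decomposing the right-hand side as $e_f+\eta$, with $e_f=y-g_t(\theta)$ the Transformer's empirical fitting error and $\eta=g_t(\theta)-X_t w_{\text{eff}}$ the nonlinearity residual, both pieces are $t$-independent (the residual is $O(\gamma_{\text{eff}})$), yielding $r\le C_r$ without ever touching $\lambda_{\max}(X_t^\top X_t)$. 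This completely bypasses the $O(t)$ obstacle you correctly identified; your projector-cancellation mechanism, even if it can be made rigorous, is unnecessary here.
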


See the justification of all the lemmas in Appendix~\ref{app:proof_of_lemmas} and the full proof of Theorem \ref{thm:rc_no_pe_final} in Appendix~\ref{app:proof_of_standard_rc}.

\subsection{RC for Transformer with PE}
\label{sec:rc_with_pe}
While Section \ref{sec:RC_for_TF} gives a bound for a Transformer without PE, the standard architectures universally include PEs to handle sequence order. Although in the context of ICL, the order of $(x_i,y_i)$ in the prompt does not influence the solution for linear regression, adding PEs in the model increases the number of parameters, thus increasing the parameter space, and our focus is on quantifying the potential negative impact of PE on the generalization bound. Given the above, our next step is to understand how incorporating completely trainable parameters impacts the generalization bounds.

To consider PE, we employ a function class decomposition. We model any function $f_{\text{PE}}$ from the class $\mathcal{F}_{\text{PE}}$ as the sum of a function from the no-PE class $\mathcal{F}_{\text{NoPE}}$ and a bias term $f_{\text{bias}}$:
\begin{equation*}
f_{\text{PE}} = f_{\text{NoPE}} + f_{\text{bias}},
\end{equation*}
where $f_{\text{bias}}$ belongs to the class $\mathcal{F}_{\text{bias}} \triangleq \{f_{\text{PE}} - f_{\text{NoPE}}\}$. This implies $\mathcal{F}_{\text{PE}} \subseteq \mathcal{F}_{\text{NoPE}} + \mathcal{F}_{\text{bias}}$. The subadditivity of the Rademacher complexity then allows us to bound the total complexity as:
\begin{equation}
\label{eq:subadditivity}
\operatorname{Rad}_S(\mathcal{F}_{\text{PE}}) \le \operatorname{Rad}_S(\mathcal{F}_{\text{NoPE}}) + \operatorname{Rad}_S(\mathcal{F}_{\text{bias}}).
\end{equation}

This decomposition simplifies the analysis by only bounding the complexity of the bias class, $\operatorname{Rad}_S(\mathcal{F}_{\text{bias}})$, which depends on the magnitude of the functional difference introduced by the PE. We formalize this with the following assumption.

\begin{assumption}[Bounded PE Effect]
\label{assump:pe_effect}
Let $w_{\text{PE}}, \eta_{\text{PE}}$ and $w_{\text{NoPE}}, \eta_{\text{NoPE}}$ be the effective linear weights and error for models with and without PE, respectively. We assume the difference for the weight is bounded by a constant $C_{PE} > 0$:
\begin{equation}
\label{eq:pe_effect}
\|w_{\text{PE}} - w_{\text{NoPE}}\|_2 \le C_{PE}. 
\end{equation}
\end{assumption}
The above assumption supposes that adding the bounded, trainable PE matrix to the input embeddings results in a correspondingly bounded change to the model's effective linear weight. 

To be specific, the function class $\mathcal{F}_{\text{bias}}$ represents the difference in the models' outputs, $f_\text{PE}(X) - f_\text{NoPE}(X)$. This difference can be decomposed into a linear component, $(w_\text{PE}-w_\text{NoPE})^\top x_q$, and a non-linear residual component, $\zeta(X) = \eta_{\text{PE}}-\eta_{\text{NoPE}}$. In our analysis, we focus on the dominant linear effect. This is justified because the residual term $\zeta(X)$ is a lower-order effect. As we argue in Appendix \ref{app:proof_of_pe_bounds}, the difference $\zeta(X)$ is bounded by a $t$-independent constant related to $\gamma_{\text{eff}}$. We therefore analyze the complexity of the linear function class: $\mathcal{F}_{\text{bias}} = \{x \mapsto (\Delta w)^\top x \mid \|\Delta w\|_2 \le C_{PE}\}$.

With this framework, we can get the result with PE.
\begin{theorem}[RC with PE]
\label{thm:rad_bound_with_pe}
With Eq.\eqref{eq:pe_effect} and Assumption \ref{assump:pe_effect}, let $\mathcal{F}_{S_t,\text{PE}}$ be the function class of a Transformer with a completely trainable PE. The Rademacher complexity for $\mathcal{F}_{S_t,\text{PE}}$ is bounded as:
\begin{equation}
\operatorname{Rad}_S(\mathcal{F}_{S_t,\text{PE}}) \lesssim O\left( \frac{L_f \sqrt{rD} \sqrt{\log t/r}}{\sqrt{m t}} \right) + \frac{C_{PE} \sqrt{d}}{\sqrt{m}}.
\end{equation}
\end{theorem}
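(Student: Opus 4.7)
The plan is to exploit the decomposition $\mathcal{F}_{S_t,\text{PE}} \subseteq \mathcal{F}_{S_t,\text{NoPE}} + \mathcal{F}_{\text{bias}}$ that the excerpt already sets up, invoke subadditivity of Rademacher complexity as in Eq.~\eqref{eq:subadditivity}, and then handle the two resulting terms in turn. The first term, $\operatorname{Rad}_S(\mathcal{F}_{S_t,\text{NoPE}})$, is exactly what Theorem~\ref{thm:rc_no_pe_final} delivers and gives the $L_f\sqrt{rD}\sqrt{\log t/r}/\sqrt{mt}$ contribution verbatim. All remaining work is therefore to bound $\operatorname{Rad}_S(\mathcal{F}_{\text{bias}})$ by $C_{PE}\sqrt{d}/\sqrt{m}$.

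For the bias class, I would first argue that after absorbing $W_c$, $\sigma$, $W_V$ via the same contraction steps used in Section~\ref{sec:RC_for_TF}, the functional difference $f_{\text{PE}}(X) - f_{\text{NoPE}}(X)$ evaluated at the query token splits into a linear piece $(w_{\text{PE}} - w_{\text{NoPE}})^\top x_q$ and a non-linear residual $\zeta(X) = \eta_{\text{PE}} - \eta_{\text{NoPE}}$. The excerpt already tells us that $\zeta(X)$ is a lower-order effect bounded by a $t$-independent constant depending on $\gamma_{\text{eff}}$ (to be pinned down in Appendix~\ref{app:proof_of_pe_bounds}); by the same Dudley-integral machinery as before, its Rademacher complexity is at most of order $1/\sqrt{m}$ times a constant that is strictly smaller than the $C_{PE}\sqrt{d}$ scale, so it is absorbed into the leading $C_{PE}\sqrt{d}/\sqrt{m}$ term. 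What remains is the linear class $\mathcal{F}_{\text{lin}} = \{x \mapsto (\Delta w)^\top x : \|\Delta w\|_2 \le C_{PE}\}$ stipulated just before the theorem.

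For $\mathcal{F}_{\text{lin}}$ I would apply the standard Cauchy--Schwarz/linear-class Rademacher bound,
\begin{equation*}
\operatorname{Rad}_S(\mathcal{F}_{\text{lin}}) \le \frac{C_{PE}}{m}\, \mathbb{E}_{\epsilon}\Bigl\|\sum_{i=1}^m \epsilon_i x_{q,i}\Bigr\|_2 \le \frac{C_{PE}}{m}\sqrt{\sum_{i=1}^m \|x_{q,i}\|_2^2},
\end{equation*}
and then use Assumption~\ref{assump:data_model} ($x_q \sim \mathcal{N}(0,I_d)$) together with the Gaussian concentration $\mathbb{E}\|x_{q}\|_2^2 = d$ and a sub-exponential tail on $\sum_i\|x_{q,i}\|_2^2$ to conclude that with high probability this quantity is $O(C_{PE}\sqrt{d}/\sqrt{m})$. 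Summing the two pieces yields the advertised bound.

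The main obstacle, and the only part that is not a bookkeeping exercise, is the treatment of the residual $\zeta(X)$: I have to show that the attention-softmax output depends on $P$ in a way that, after subtracting the no-PE model, is Lipschitz in $P$ with a constant independent of $t$, so that $\zeta$ truly is subdominant to the linear PE contribution. My plan is to use the softmax Lipschitz property and $\|P\|_F \le B_P$ to obtain a uniform-in-$t$ bound on $\|\zeta(X)\|_\infty$ on the good event $\mathcal{G}_t$, then bound $\operatorname{Rad}_S$ of the resulting class by Massart's lemma or a direct covering argument using the same effective parameter space machinery from Lemma~\ref{lemma:CoveringNumber_bound}. Everything else is a direct application of results already stated.
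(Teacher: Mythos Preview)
Your proposal is essentially the paper's own argument: subadditivity from Eq.~\eqref{eq:subadditivity}, Theorem~\ref{thm:rc_no_pe_final} for the no-PE term, and the standard Cauchy--Schwarz linear-class bound $\operatorname{Rad}_S(\mathcal{F}_{\text{lin}}) \le \frac{C_{PE}}{m}\sqrt{\sum_i \|x_{q,i}\|_2^2} \lesssim C_{PE}\sqrt{d}/\sqrt{m}$ (via $\mathbb{E}\|x_q\|_2^2 = d$ and concentration) for the bias term. The one place you diverge slightly is in controlling the residual $\zeta(X)$: you propose to bound it via Lipschitz continuity of the softmax block in $P$ together with $\|P\|_F \le B_P$, whereas the paper instead bounds $\|\eta_{\text{PE}}\|_{m,2}$ and $\|\eta_{\text{NoPE}}\|_{m,2}$ \emph{separately}, each by $O(\gamma_{\text{eff}})$, using the parameter-to-function Lipschitz constant $L_f$ from Lemma~\ref{lemma:CoveringNumber_bound} and the fact that any $\theta \in \Theta_{S_t}$ is within $O(\gamma_{\text{eff}})$ of the ideal $\theta^*$ at which the nonlinear residual vanishes; the triangle inequality then gives $\|\zeta\|_{m,2} \le O(\gamma_{\text{eff}})$. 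Your route would yield a constant scaling with $B_P$ rather than $\gamma_{\text{eff}}$, which is still $t$-independent and therefore fine for the statement, but the paper's route reuses machinery already in place and ties the residual directly to the effective-parameter-space radius rather than the raw PE norm.
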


The bound in Theorem \ref{thm:rad_bound_with_pe} can be interpreted as an upward shift of the No-PE complexity. $C_{PE}\sqrt{d}/\sqrt{m}$ represents the cost incurred by the PE module. Moreover, when $t$ becomes larger, $C_{PE} \sqrt{d}/\sqrt{m}$ dominates. This implies that with sufficiently long context, the model's generalization ability is no longer limited by the information in the context, but by the intrinsic property raised by the architectural complexity of the PE. The proof can be found in Appendix \ref{app:proof_of_pe_bounds}.

\subsection{ARC for Transformer}
\label{sec:arc_for_TF}
In this section, we extend our analysis by considering the adversarial robustness of the Transformer in ICL. The primary challenge in this setting is the $\text{max}$ operator defined in \ref{def:ARC}, which makes direct analysis difficult. To bypass this, we introduce the surrogate loss framework. We reveal that the adversarial attack increases the complexity bound by a factor $\Phi$, which quantifies the degree of the attack. Finally, we also extend this result with a trainable PE module.

\paragraph{The Surrogate Loss Framework.}
We follow a general method by defining a simpler surrogate loss that upper bounds the true adversarial loss, then analyzing the complexity of this surrogate.

\begin{proposition}[Surrogate Loss Upper Bound]
\label{prop:surrogate}
Assume the model function $f_\theta(X)$ is $L_x$-Lipschitz with respect to its input $X$. Then the true adversarial loss $\tilde{\ell}_\theta$ is upper-bounded by the surrogate loss $\ell_{\text{rob}}$:
\begin{equation}
\tilde{\ell}_\theta(X,y) \le \ell_{\text{rob}}(f_\theta(X), y), 
\end{equation}
where $\ell_{\text{rob}}(f_\theta(X), y) \triangleq \left( |f_\theta(X) - y| + L_x\varepsilon \right)^2$.
\end{proposition}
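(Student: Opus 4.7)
The plan is to unfold the definition of $\tilde{\ell}_\theta$ in Eq.~\eqref{eq:adv_loss}, control the perturbed prediction $f_\theta(X')$ via the Lipschitz assumption, and then take the supremum. Concretely, for any admissible $X'$ with $\|X'-X\|_F\le\varepsilon$, the Lipschitz hypothesis gives $|f_\theta(X')-f_\theta(X)|\le L_x\|X'-X\|_F\le L_x\varepsilon$, so by the triangle inequality
\begin{equation*}
|f_\theta(X')-y| \le |f_\theta(X)-y| + |f_\theta(X')-f_\theta(X)| \le |f_\theta(X)-y| + L_x\varepsilon.
\end{equation*}

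Squaring both sides (both are nonnegative) yields $(f_\theta(X')-y)^2 \le (|f_\theta(X)-y|+L_x\varepsilon)^2$. The right-hand side is independent of $X'$, so taking the supremum over $X'$ in the $\varepsilon$-Frobenius ball preserves the inequality:
\begin{equation*}
\tilde{\ell}_\theta(X,y) = \sup_{\|X'-X\|_F\le\varepsilon}(f_\theta(X')-y)^2 \le \bigl(|f_\theta(X)-y|+L_x\varepsilon\bigr)^2 = \ell_{\text{rob}}(f_\theta(X),y),
\end{equation*}
which is exactly the claim.

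There is no serious technical obstacle here: the argument is the standard Lipschitz surrogate trick adapted from \citet{yin2019rademacher} and \citet{gao2021theoretical}, and the only thing to verify is that the Frobenius-norm threat model in Eq.~\eqref{eq:perturbed_input_format} is consistent with the Lipschitz assumption stated on $f_\theta$. The one subtlety worth flagging in the write-up is that the Lipschitz constant $L_x$ must indeed be computed with respect to the same Frobenius norm used to define the adversarial ball; otherwise a matrix-norm conversion factor would appear. Once this consistency is noted, the proof is a direct two-line triangle inequality plus monotonicity of $t\mapsto t^2$ on $[0,\infty)$.
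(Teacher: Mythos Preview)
Your proposal is correct and follows essentially the same approach as the paper's proof: apply the Lipschitz bound to get $|f_\theta(X')-f_\theta(X)|\le L_x\varepsilon$, use the triangle inequality to control $|f_\theta(X')-y|$, square, and take the supremum. Your added remark about norm consistency between the Lipschitz constant and the Frobenius-ball threat model is a helpful clarification not made explicit in the paper.
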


By the monotonicity of the Rademacher complexity, this proposition immediately implies that $\operatorname{Rad}_{S'}(\tilde{\mathcal{L}}_{\mathcal{F}_{S'_t}}) \le \operatorname{Rad}_{S'}(\mathcal{L}_{\text{rob}})$, where $\mathcal{L}_{\text{rob}}$ is the class of surrogate loss functions. The problem is now reduced to bounding the complexity of this surrogate class. The surrogate loss has a compositional structure $h(f(x), y)$, where $h(z, y) = (|z-y|+L_x\varepsilon)^2$. This structure allows us to apply Lemma \ref{lemma:Talagrand's contraction} to relate the complexity of the loss class back to the complexity of the original function class $\mathcal{F}_{S_t}$.
\begin{proposition}[Complexity Bound via Surrogate]
\label{prop:complexity_surrogate}
Let $\tilde{\mathcal{L}}_{\mathcal{F}_{S'_t}}$ be the adversarial loss class, where each element is $\tilde{\ell}_\theta(X,y)$ as defined in Eq.\eqref{eq:adv_loss}. Suppose the model's output is bounded by $|f_\theta(X)|\leq M$. Also, as shown in Lemma \ref{lemma:y_bound}, the response is bounded by $|y|\leq M_y$ with high probability. Then there exists a Lipschitz constant $L_h$ such that the Rademacher complexity of the adversarial loss class is bounded by the complexity of the standard function class:
\begin{equation}
\operatorname{Rad}_{S'}(\tilde{\mathcal{L}}_{\mathcal{F}_{S'_t}}) \le L_h \cdot \operatorname{Rad}_{S'}(\mathcal{F}_{S'_t}),
\end{equation}
where $L_h=2((M+M_y)+L_x\varepsilon)$.
\end{proposition}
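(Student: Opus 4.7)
The plan is to follow the surrogate-loss framework in two moves: first, replace the intractable adversarial loss class by the surrogate class via Proposition~\ref{prop:surrogate}, and second, apply a Talagrand-type contraction to peel off the outer quadratic map and reduce the problem to the Rademacher complexity of $\mathcal{F}_{S'_t}$ itself. The overall bound is then the Lipschitz constant of that outer map times $\operatorname{Rad}_{S'}(\mathcal{F}_{S'_t})$.

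Concretely, from Proposition~\ref{prop:surrogate} one has $\tilde{\ell}_\theta(X,y) \le \ell_{\text{rob}}(f_\theta(X),y)$ pointwise in $\theta$, and the stated monotonicity of Rademacher complexity already yields
\begin{equation*}
\operatorname{Rad}_{S'}(\tilde{\mathcal{L}}_{\mathcal{F}_{S'_t}}) \;\le\; \operatorname{Rad}_{S'}(\mathcal{L}_{\text{rob}}).
\end{equation*}
Each element of $\mathcal{L}_{\text{rob}}$ is a composition $h_{y}\circ f_\theta$, where $h_y(z) := (|z-y|+L_x\varepsilon)^2$ is $\theta$-free and depends only on the (fixed) label $y$. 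I would then invoke the Ledoux--Talagrand contraction (Lemma~\ref{lemma:Talagrand's contraction}), which produces $\operatorname{Rad}_{S'}(\mathcal{L}_{\text{rob}}) \le L_h\cdot \operatorname{Rad}_{S'}(\mathcal{F}_{S'_t})$ provided each $h_{y_i}$ is $L_h$-Lipschitz on the relevant range of $z$.

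To identify $L_h$, I would differentiate: $h_y'(z) = 2(|z-y|+L_x\varepsilon)\,\mathrm{sgn}(z-y)$, so $|h_y'(z)| \le 2(|z|+|y|+L_x\varepsilon)$ wherever defined. Using the model output bound $|f_\theta(X)|\le M$ and the response bound $|y|\le M_y$ from Lemma~\ref{lemma:y_bound}, the composite estimate $|h_y'(z)| \le 2((M+M_y)+L_x\varepsilon)=L_h$ holds uniformly in the relevant $y$ and $z$, which is exactly the constant claimed in the proposition.

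The main obstacle I expect is reconciling the high-probability nature of the bound $|y|\le M_y$ with the contraction step: since the inequality does not hold almost surely, the argument should be carried out on the good event of Lemma~\ref{lemma:y_bound} and the complementary event absorbed into the overall failure probability. A secondary technical point is that the outer map $h_{y_i}$ depends on the sample via $y_i$, so one should appeal to the sample-dependent version of Ledoux--Talagrand in which each $h_{y_i}$ shares the common Lipschitz constant $L_h$; this is a standard variant that introduces no extra constants. Once these two bookkeeping items are handled, the chain of inequalities yields the claimed contraction-type bound.
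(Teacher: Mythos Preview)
Your proposal is correct and follows essentially the same approach as the paper: first pass to the surrogate loss class via Proposition~\ref{prop:surrogate} and the monotonicity argument, then apply Talagrand's contraction (Lemma~\ref{lemma:Talagrand's contraction}) with the Lipschitz constant of $h_y(z)=(|z-y|+L_x\varepsilon)^2$, computing $|h_y'(z)|\le 2(M+M_y+L_x\varepsilon)$ exactly as you did. Your additional remarks on conditioning on the high-probability event of Lemma~\ref{lemma:y_bound} and on the sample-dependent form of the contraction are valid refinements that the paper leaves implicit.
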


\paragraph{The Impact of Attack on Solution Space.}
Under adversarial attack, the geometry of the solution space also changes. Following the same paradigm in Section \ref{sec:RC_for_TF}, as detailed below, we find that the diameter of the perturbed solution space can be expressed as the clean diameter scaled by a penalty term.

\begin{proposition}[Diameter of the Perturbed Solution Space]
\label{prop:perturbed_diameter}
The diameter of the adversarially perturbed solution space $W_{S'_t}$ is bounded by
\begin{equation}
\label{eq:perturbed_diameter}
\operatorname{Diam}(W_{S'_t})\lesssim O\left(\sqrt{r/t}\right) \cdot \Phi(\varepsilon, t, d),
\end{equation}
where $\Phi(\varepsilon, t, d) = \frac{1}{1-\sqrt{d/t}-\varepsilon/\sqrt{t}}$. This bound is meaningful only when $\varepsilon < \sqrt{t} - \sqrt{d}$.
\end{proposition}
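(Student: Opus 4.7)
}
The plan is to mirror the argument used for the clean solution space in Lemma \ref{lemma:para_bound}, but with the adversarially perturbed design matrix $X'_t \in \mathbb{R}^{t\times d}$ in place of $X_t$, and then quantify how the perturbation affects the smallest singular value that controls the ellipsoid. Concretely, by the same tolerance-ellipsoid modeling used in Eq.~\eqref{eq:ellipsoid_definition}, the perturbed solution space takes the form
\begin{equation*}
W_{S'_t} = \bigl\{ w : (w - \hat{w}')^\top (X_t'^\top X_t')(w - \hat{w}') \le r \bigr\},
\end{equation*}
so its diameter equals $2\sqrt{r}/\sigma_{\min}(X'_t)$. The whole task therefore reduces to producing a lower bound on $\sigma_{\min}(X'_t)$ that degrades gracefully in $\varepsilon$.

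The key step is a two-stage control of $\sigma_{\min}(X'_t)$. First, since the threat model imposes $\|X'-X\|_F \le \varepsilon$ on the full prompt and $X'_t$ is a submatrix of $X'$, we have $\|X'_t - X_t\|_{\text{op}} \le \|X'_t - X_t\|_F \le \varepsilon$, so by Weyl's inequality
\begin{equation*}
\sigma_{\min}(X'_t) \ge \sigma_{\min}(X_t) - \varepsilon.
\end{equation*}
Second, on the good event $\mathcal{G}_t$ from Lemma \ref{lemma:para_bound}, the standard Gaussian lower tail estimate (Davidson--Szarek, or the argument already used in Appendix~\ref{app:tail_event}) yields $\sigma_{\min}(X_t) \gtrsim \sqrt{t} - \sqrt{d}$ with high probability. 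Chaining these two inequalities produces $\sigma_{\min}(X'_t) \gtrsim \sqrt{t} - \sqrt{d} - \varepsilon$.

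Substituting into the ellipsoid diameter formula gives
\begin{equation*}
\operatorname{Diam}(W_{S'_t}) \le \frac{2\sqrt{r}}{\sqrt{t} - \sqrt{d} - \varepsilon} = \frac{2\sqrt{r/t}}{1 - \sqrt{d/t} - \varepsilon/\sqrt{t}},
\end{equation*}
which is precisely $O(\sqrt{r/t}) \cdot \Phi(\varepsilon, t, d)$, and the condition $\varepsilon < \sqrt{t}-\sqrt{d}$ is exactly what keeps the denominator positive so that the bound is non-vacuous.

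The main obstacle I expect is not the Weyl/Gaussian concentration step, which is routine, but rather justifying that the ellipsoid characterization of $W_{S'_t}$ with the \emph{same} tolerance $r$ carries over from the clean to the adversarial setting. One has to argue that the Lipschitz/regularity assumptions underlying Eq.~\eqref{eq:ellipsoid_definition} (invertibility of $X_t'^\top X_t'$, well-posedness of the OLS analogue $\hat{w}'$, and the constant $C_r$ from Lemma \ref{lemma:L_G_and_r}) remain valid on the intersection of $\mathcal{G}_t$ with the regime $\varepsilon < \sqrt{t}-\sqrt{d}$; once that is established, the rest of the proof is an essentially mechanical substitution of $X'_t$ for $X_t$ in the clean argument.
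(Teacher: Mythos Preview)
Your proposal is correct and matches the paper's proof essentially line for line: the paper also writes $X'_t = X_t + \Delta_X$, applies Weyl's inequality (Lemma~\ref{lemma:weyl_inequality}) together with $\|\Delta_X\|_2 \le \|\Delta_X\|_F \le \varepsilon$ to obtain $\sigma_{\min}(X'_t) \ge \sqrt{t}-\sqrt{d}-\varepsilon$, and then substitutes into the ellipsoid diameter formula to extract the factor $\Phi$. Your closing caveat about whether the same tolerance $r$ transfers to the perturbed ellipsoid is well taken; the paper's proof simply assumes this without further comment.
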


Similar to Eq. \eqref{eq:diameter_contraction}, $\Phi$ arises from analyzing the geometry of the attacked solution space $W_{S'_t}$, but with a perturbation according to Lemma \ref{lemma:weyl_inequality}. As $\Phi>1$, the attack will enlarge the diameter of $W'_{S_t}$, which in turn enlarges the Rademacher complexity. When the attack strength $\varepsilon\rightarrow0$, the factor $\Phi$ approaches 1, and the bound in Eq. \eqref{eq:perturbed_diameter} recovers the clean bound from Eq. \eqref{eq:diameter_contraction}. Moreover, $\Phi$ is a decreasing function of $t$, which reveals that increasing the context length enhances adversarial robustness by reducing the attack's effect. The full proof for all the Propositions can be found in Appendix~\ref{app:proof_of_proposition}.

Then we extend the Lemma \ref{lemma:para_bound} to the adversarial setting as follows. In order to connect $W_{S'_t}$ and $\Theta_{S'_t}$, we make an assumption for the mapping.

\begin{assumption}[Robust Inverse Lipschitz Continuity]
\label{assump:robust_inverse_lipschitz}
Let $G_{\text{adv}}: \theta \mapsto w'_{\text{eff}}$, where $w'_{\text{eff}}$ is computed based on the perturbed context $S'_t$. We assume this mapping is also inverse Lipschitz continuous with a constant $L'_{G}$:
\begin{equation}
    \|\theta_1 - \theta_2\|_2 \le L'_{G} \|G_{\text{adv}}(\theta_1) - G_{\text{adv}}(\theta_2)\|_2.
\end{equation}
\end{assumption}
Assumption \ref{assump:robust_inverse_lipschitz} extends the stability property of the parameter-to-solution-space mapping $G$ to the adversarial setting. This is extended from Lemma \ref{lemma:para_bound} (justified in Lemma \ref{lemma:L_G_and_r}). We posit that a small, $\varepsilon$-bounded perturbation should only cause a correspondingly small, bounded degradation in the geometric properties of the mapping $G$. This ensures the inverse Lipschitz property is preserved, potentially with a slightly larger constant $L'_G$.

Given the above assumption, we finally present the adversarial RC result for Transformers without PE (Theorem \ref{thm:arc_without_pe}) and with PE (Theorem \ref{thm:arc_with_pe}):

\begin{theorem}[ARC without PE]
\label{thm:arc_without_pe}
Under Assumption \ref{assump:robust_inverse_lipschitz}, if $\varepsilon < \sqrt{t} - \sqrt{d}$, its Adversarial Rademacher complexity is bounded by:
\begin{equation}
\label{eq:bound_arc_nope}
\operatorname{Rad}_{S'}(\tilde{\mathcal{L}}_{\mathcal{F}_{S'_t}}) \lesssim O\left( \frac{L_h L_f \sqrt{rD} \sqrt{\log t/r}}{\sqrt{m t}} \right) \cdot \Phi(\varepsilon, t, d),
\end{equation}
where $\Phi(\varepsilon, t, d)$ is defined in Proposition~\ref{prop:perturbed_diameter}.
\end{theorem}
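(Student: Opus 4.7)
The plan is to mirror the clean-case derivation from Theorem~\ref{thm:rc_no_pe_final}, inserting the adversarial machinery at two controlled points: the reduction from the adversarial loss class to the plain function class, and the replacement of the clean solution-space diameter with its perturbed counterpart. The final bound will then be the clean Rademacher bound with two multiplicative decorations: the surrogate-loss Lipschitz factor $L_h$ and the attack amplification $\Phi(\varepsilon,t,d)$.

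First, I would apply Proposition~\ref{prop:complexity_surrogate} to reduce the left-hand side:
\begin{equation*}
\operatorname{Rad}_{S'}(\tilde{\mathcal{L}}_{\mathcal{F}_{S'_t}}) \le L_h \cdot \operatorname{Rad}_{S'}(\mathcal{F}_{S'_t}),
\end{equation*}
so the task becomes bounding $\operatorname{Rad}_{S'}(\mathcal{F}_{S'_t})$ on the perturbed sample. Next I would rerun the Dudley-integral pipeline of Lemma~\ref{lemma:RC_dudley} on $\mathcal{F}_{S'_t}$, peeling the $W_c$, $\sigma$, and $W_V$ layers as before to reduce to the effective parameter space $\Theta_{S'_t} = \{(W_{\text{in}}, W_{QK}) : \|W_{QK} - W^*_{QK}(S'_t)\|_F \le \gamma_{\text{eff}}\}$. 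The Lipschitz translation of Lemma~\ref{lemma:CoveringNumber_bound} (Eq.~\eqref{eq:covering_translation}) still holds because the architecture-dependent constant $L_f$ does not depend on whether the context is clean or attacked; this yields
\begin{equation*}
\log N(\mathcal{F}_{S'_t}, \|\cdot\|_{m,2}, \varepsilon') \le D \log\!\left(3 L_f \operatorname{Diam}(\Theta_{S'_t})/\varepsilon'\right).
\end{equation*}

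The crucial replacement happens here: instead of Lemma~\ref{lemma:para_bound}, I invoke Assumption~\ref{assump:robust_inverse_lipschitz} to push the parameter diameter through the perturbed map $G_{\text{adv}}$, giving $\operatorname{Diam}(\Theta_{S'_t}) \le L'_G \operatorname{Diam}(W_{S'_t})$. Then Proposition~\ref{prop:perturbed_diameter} supplies
\begin{equation*}
\operatorname{Diam}(\Theta_{S'_t}) \lesssim L'_G \sqrt{r/t} \cdot \Phi(\varepsilon,t,d),
\end{equation*}
which is valid precisely when $\varepsilon < \sqrt{t}-\sqrt{d}$. Plugging this diameter into the Dudley integral exactly as in Theorem~\ref{thm:rc_no_pe_final} produces the factor $\sqrt{rD}\sqrt{\log(t/r)}/\sqrt{mt}$ multiplied by $\Phi$. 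Multiplying by $L_h$ from the first step gives Eq.~\eqref{eq:bound_arc_nope}.

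The main obstacle I anticipate is bookkeeping rather than a deep technical barrier: one must verify that the good-event arguments behind Lemma~\ref{lemma:para_bound} still concentrate when $X_t$ is replaced by $X'_t$, so that the ellipsoidal description in Eq.~\eqref{eq:ellipsoid_definition} remains well-defined on the perturbed sample. This is where Weyl's inequality (as used in Proposition~\ref{prop:perturbed_diameter}) does the work: it controls the smallest singular value of $X'_t$ in terms of that of $X_t$ minus $\varepsilon$, which is exactly the source of the factor $1-\sqrt{d/t}-\varepsilon/\sqrt{t}$ in the denominator of $\Phi$. Once this perturbed spectral bound is in hand, the tolerance $r$ can be kept $t$-independent by the same argument as in Lemma~\ref{lemma:L_G_and_r}, and the rest of the derivation is a clean transcription of the proof of Theorem~\ref{thm:rc_no_pe_final}.
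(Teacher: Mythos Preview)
Your proposal is correct and follows essentially the same route as the paper: reduce via Proposition~\ref{prop:complexity_surrogate} to $L_h\cdot\operatorname{Rad}_{S'}(\mathcal{F}_{S'_t})$, then rerun the Dudley-integral argument of Theorem~\ref{thm:rc_no_pe_final} with the perturbed diameter from Proposition~\ref{prop:perturbed_diameter} and Assumption~\ref{assump:robust_inverse_lipschitz}. The bookkeeping you anticipate is exactly what the paper fills in: it introduces a separate $\gamma'_{\text{eff}}$ for $\Theta_{S'_t}$ and verifies $\Delta'_t=\|W^*_{QK}(S'_t)-W^*_{QK}\|_F\lesssim O(\sqrt{d/t})$ by bounding $\|G'_t-G_t\|_2\le \varepsilon^2/t+2\sqrt{d}\,\varepsilon/\sqrt{t}$, after which the rest is a direct transcription.
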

The proof combines the results from the preceding propositions. First, by Proposition~\ref{prop:complexity_surrogate}, we bound the ARC of the loss class by the Rademacher complexity of the function class, scaled by $L_h$. We then bound the complexity of the function class $\mathcal{F}_{S'_t}$ using the Dudley integral framework as in the standard case, but now applying it to the adversarially perturbed parameter space $\Theta_{S'_t}$. The decaying diameter of this space is a result of Proposition~\ref{prop:perturbed_diameter}, which introduces the $\Phi$ into the final bound. The full proof and the definition of $\Theta_{S'_t}$ is in Appendix~\ref{app:proof_of_arc_nope}.

Following the same decomposition method used for the standard RC bound, we can derive the ARC for the model with PE.

\begin{theorem}[ARC with PE]
\label{thm:arc_with_pe}
Let $\tilde{\mathcal{L}}_{\mathcal{F}_{S'_t,\text{PE}}}$ be the adversarial loss class for a Transformer with PE. Under Assumption \ref{assump:pe_effect} and \ref{assump:robust_inverse_lipschitz}, when $\varepsilon<\sqrt{t}-\sqrt{d}$, its Adversarial Rademacher complexity is bounded by:
\begin{align}
\operatorname{Rad}_{S'}(\tilde{\mathcal{L}}_{\mathcal{F}_{S'_t,\text{PE}}}) &\lesssim L_h \cdot \Bigg[ \left( \frac{C_{PE}\sqrt{d}}{\sqrt{m}} + \frac{C_{PE}\varepsilon}{\sqrt{m}} \right)  \\&+\left( O\left( \frac{L_f\sqrt{rD} \sqrt{\log t/r}}{\sqrt{m t}} \right)  \cdot \Phi(\varepsilon, t, d) \right)\nonumber \Bigg].
\end{align}
\end{theorem}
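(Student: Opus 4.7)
The plan is to combine the functional decomposition underlying Theorem~\ref{thm:rad_bound_with_pe} with the surrogate-loss pipeline of Theorem~\ref{thm:arc_without_pe}. In broad strokes: I first peel off the adversarial loss via Proposition~\ref{prop:complexity_surrogate}, reducing the problem to bounding $L_h\operatorname{Rad}_{S'}(\mathcal{F}_{S'_t,\text{PE}})$; next I decompose the PE function class into a NoPE part and a bounded linear bias part as in Section~\ref{sec:rc_with_pe}; finally I bound each piece separately, so that the NoPE part inherits the $\Phi(\varepsilon,t,d)$ factor from Theorem~\ref{thm:arc_without_pe} while the bias part is a bounded-norm linear class whose adversarial Rademacher complexity is straightforward to handle.

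Concretely, after applying Proposition~\ref{prop:complexity_surrogate} (whose surrogate bound and Lipschitz argument transfer verbatim once $L_x$ is updated for the PE model), I write $f_{\text{PE}} = f_{\text{NoPE}} + f_{\text{bias}} + \zeta$, where $\mathcal{F}_{\text{bias}} = \{x \mapsto (\Delta w)^\top x : \|\Delta w\|_2 \le C_{PE}\}$ by Assumption~\ref{assump:pe_effect}, and absorb the nonlinear residual $\zeta$ into a lower-order term using the $\gamma_{\text{eff}}$-dependent bound from Appendix~\ref{app:proof_of_pe_bounds} (which remains valid on the adversarially expanded parameter space by Assumption~\ref{assump:robust_inverse_lipschitz}). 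Subadditivity of the Rademacher complexity then yields $\operatorname{Rad}_{S'}(\mathcal{F}_{S'_t,\text{PE}}) \le \operatorname{Rad}_{S'}(\mathcal{F}_{S'_t,\text{NoPE}}) + \operatorname{Rad}_{S'}(\mathcal{F}_{S'_t,\text{bias}})$, with each piece evaluated on the perturbed inputs.

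For the NoPE term I invoke Theorem~\ref{thm:arc_without_pe}, factoring out the already-exposed $L_h$, to recover $O(L_f\sqrt{rD}\sqrt{\log t/r}/\sqrt{mt})\cdot \Phi(\varepsilon,t,d)$. For the bias term I bound the adversarial Rademacher complexity of a linear class of radius $C_{PE}$: resolving the inner supremum by Cauchy--Schwarz gives $\sup_{x'_{q,i}}\sigma_i(\Delta w)^\top x'_{q,i} = \sigma_i(\Delta w)^\top x_{q,i} + \varepsilon \|\Delta w\|_2$, which decouples the complexity into the clean linear Rademacher term, bounded by $C_{PE}\sqrt{d}/\sqrt{m}$ under the Gaussian input model, and an adversarial penalty of the stated $C_{PE}\varepsilon/\sqrt{m}$ order. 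Multiplying the combined expression by $L_h$ reproduces the theorem.

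The main obstacle I anticipate is controlling the residual $\zeta$ on the adversarially expanded parameter space: the linearization $f_{\text{PE}} - f_{\text{NoPE}} \approx (\Delta w)^\top x_q$ is only approximate because of the softmax nonlinearity, and an attacker could in principle exploit this nonlinearity to inflate $\zeta$. Pushing the clean $\gamma_{\text{eff}}$-bound through Assumption~\ref{assump:robust_inverse_lipschitz} and verifying that the resulting enlargement is absorbed into the $L_h\varepsilon$ allotment is the most delicate step; once this is in place, the remaining manipulations are direct combinations of the clean-PE and NoPE-ARC arguments already developed.
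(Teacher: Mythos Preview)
Your high-level plan---apply Proposition~\ref{prop:complexity_surrogate} to peel off $L_h$, then use subadditivity to split $\operatorname{Rad}_{S'}(\mathcal{F}_{S'_t,\text{PE}})$ into a NoPE piece (handled by Theorem~\ref{thm:arc_without_pe}) and a bounded linear bias piece---is exactly the route the paper takes. The only substantive discrepancy is in how you handle the bias term, and there your calculation does not deliver the bound you claim.

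Once Proposition~\ref{prop:complexity_surrogate} has been applied, the adversarial supremum is already absorbed into $L_h$; what remains is the \emph{standard} empirical Rademacher complexity of $\mathcal{F}_{\text{bias}}$ evaluated on the fixed perturbed sample $S'=\{X'^{(j)}\}_{j=1}^m$. There is no further inner $\sup_{x'_{q,i}}$ to resolve. If you nonetheless resolve it pointwise as you wrote, the term $\varepsilon\|\Delta w\|_2$ accumulates $m$ times in the sum, and after dividing by $m$ you obtain an adversarial penalty of order $C_{PE}\varepsilon$, not $C_{PE}\varepsilon/\sqrt{m}$. The paper instead first takes the supremum over $\Delta w$ to get
\[
\operatorname{Rad}_{S'}(\mathcal{F}_{\text{bias}})\le \frac{C_{PE}}{m}\,\mathbb{E}_\sigma\Big\|\sum_{j=1}^m \sigma_j x_q'^{(j)}\Big\|_2,
\]
writes $x_q'^{(j)}=x_q^{(j)}+\delta_q^{(j)}$ with $\|\delta_q^{(j)}\|_2\le\varepsilon$, applies the triangle inequality to the aggregate sum, and then exploits Rademacher cancellation via $\mathbb{E}_\sigma\big\|\sum_j \sigma_j \delta_q^{(j)}\big\|_2\le\big(\sum_j\|\delta_q^{(j)}\|_2^2\big)^{1/2}\le\sqrt{m}\,\varepsilon$. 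This is what recovers the $1/\sqrt{m}$ factor on the $C_{PE}\varepsilon$ term. Swap this aggregation-then-split argument in for your pointwise step and the rest of your outline goes through as written.
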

This bound reveals two distinct sources of adversarial attacks. Compared to Eq. \eqref{eq:bound_arc_nope}, $C_{PE}\varepsilon/{\sqrt{m}}$ implies that for stronger attacks, perturbing positional information itself may become a more significant threat than disturbing the data content. The proof of Theorem \ref{thm:arc_with_pe} can be found in Appendix~\ref{app:proof_of_arc_pe}.

With Theorem \ref{thm:rc_no_pe_final}, \ref{thm:rad_bound_with_pe}, \ref{thm:arc_without_pe} and \ref{thm:arc_with_pe}, we can formalize the impact of PE. Define the complexity gap as $\Delta = \operatorname{Rad}(\text{PE}) - \operatorname{Rad}(\text{noPE})$. In the clean setting, this gap is $\Delta_{\text{clean}} \approx C_{PE} \sqrt{d}/\sqrt{m}$. In the adversarial setting, the gap widens due to an additional attack-dependent term: $\Delta_{\text{adv}} \approx C_{PE} \sqrt{d}/\sqrt{m} + C_{PE} \varepsilon/\sqrt{m}$. Since $\Delta_{\text{adv}} = \Delta_{\text{clean}} + O(\varepsilon/\sqrt{m})$, it shows that the complexity cost of PE is magnified under attack, exposing Transformers with PE to be more vulnerable.

\section{Simulations}

In this section, we conduct simulations starting from training the transformer. We implemented the experiments based on \citet{garg2022can} with modifications.
\subsection{Setup}
Our data generation model follows Assumption~\ref{assump:data_model} with data dimension $d=5$. Following the experiment setting and sample size in \citet{trauger2024sequence}, we generate a total number of 10,300 tasks, which are split into a training set of 309 tasks and a validation set of 9,991 tasks. For the model architecture, we use a single-layer, single-head Transformer with a read-in layer and an embedding dimension of $d_m=1024$.

The training is based on \citet{garg2022can} but with a key modification to enable the measurement of a stable empirical risk. Instead of sampling new prompts at each training step, we create a \textbf{fixed training set}: for each of the 309 tasks in our training split, we pre-sample a single prompt of a fixed length $t$. The model is then trained for 3000 epochs to a stable training loss on this static dataset on a single NVIDIA A6000 GPU. Moreover, we set the batch size to the full training set size (309). This choice eliminates the stochasticity of mini-batch sampling, allowing for a cleaner measurement of the empirical risk and generalization gap. We use the Adam optimizer with a learning rate of $1 \times 10^{-5}$. The loss is the Mean Squared Error (MSE) computed only on the final query token of the prompt, $(\hat{y}_q - y_q)^2$. For the positional encoding, we scale the standard PE initialization by a factor of 25. This factor was chosen empirically to bring the initial norm of the PE into the same order of magnitude as the content embeddings, as in Eq.\eqref{eq:PE}. 

For the evaluation, we sample a large number of $t$-length prompts from the unseen tasks and evaluate the loss on the final query token.  The generalization gap is defined as the difference between the true risk on the validation set and the empirical risk on the fixed training set. 

To evaluate adversarial generalization, we use the Projected Gradient Descent (PGD) attack \citep{madry2017towards}. We maximize the MSE loss on the final query prediction by adding a perturbation to the prompt as in Eq. \eqref{eq:perturbed_input_format}. We constrain the perturbation using the $L_2$ norm, with a budget of $\varepsilon$. For our experiments, we use an attack configuration of $k=40$ iterations with a step size of $\alpha = \varepsilon/10$.

In the following experiments, each configuration is run 6 times with different random seeds, and we report the mean generalization gap. In alignment with our theory (Lemma~\ref{lemma:para_bound}), we focus on context lengths $t > d=5$.

\subsection{Standard Generalization Bounds}
\label{sec:simulation_PE}
In the experiment, we first validate our theoretical claim that incorporating a trainable PE module increases the model's complexity and thus its generalization gap. Figure~\ref{fig:gap_vs_t_standard} compares the generalization gap of the model with and without the PE module. 
\begin{figure}[ht]
    \centering
    \includegraphics[width=\columnwidth]{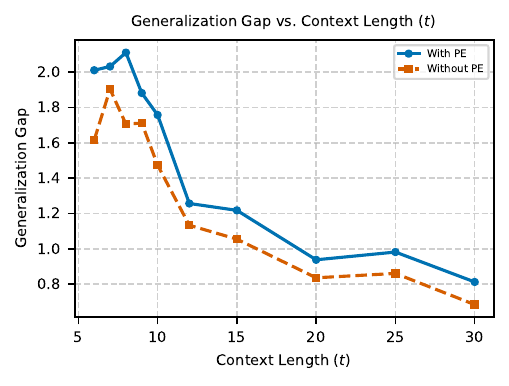}
    \caption{Mean Generalization Gap vs. Context Length ($t$) for models with and without Positional Encodings (PE).}
    \label{fig:gap_vs_t_standard}
\end{figure}
Similar to Theorem \ref{thm:rc_no_pe_final} and \ref{thm:rad_bound_with_pe}, in Figure \ref{fig:gap_vs_t_standard}, the model with a PE module consistently exhibits a larger generalization gap across all context lengths $t$. Furthermore, both curves demonstrate the $t$ decay trend in our analysis, showing that the model successfully utilizes more examples to improve the generalization bound.

In addition, we observe a transition phase for small context lengths (e.g., $t \leq 10$), where the gap has a large variation. When the number of examples $t$ is close to the dimension of the data $d$, the problem is less constrained, leading to a greater variance in the learned solution and thus a more unstable generalization gap. We noticed that this is also evidence of the "double descent" in modern machine learning \citep{belkin2019reconciling,nakkiran2021deep}. To be specific, in our framework, the Transformer implicitly learns an effective linear model of dimension $d$ as in Eq. \eqref{eq:w_eff}, which acts as the model complexity ($p$) in classic double descent. Then the error peaks at the interpolation threshold, where $p \approx n$, which is directly analogous to $d \approx t$ in our setting. Finally, as $t$ grows much larger than $d$, the system becomes over-determined and the gap decay becomes stable.

\subsection{Adversarial Generalization Bounds}

\paragraph{PE vs No-PE under Attack.}
We now extend the comparison to the adversarial setting. Figure~\ref{fig:attacked_gap_pe_vs_nope} shows the attacked generalization gap (for a fixed attack strength $\varepsilon=0.02$) for models with and without PE. 

\begin{figure}[ht]
    \centering
    \includegraphics[width=\columnwidth]{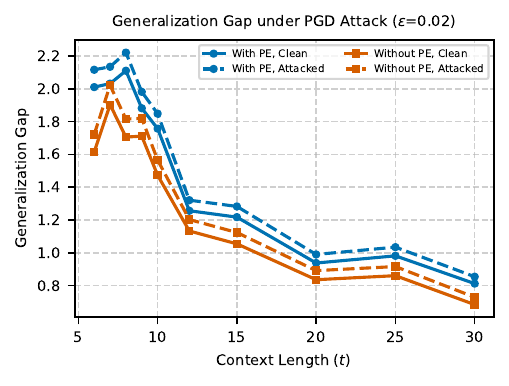}
    \caption{Mean Attacked Generalization Gap vs. Context Length ($t$) under a fixed PGD attack ($\varepsilon=0.02$).}
    \label{fig:attacked_gap_pe_vs_nope}
\end{figure}
This result is also consistent with our theoretical analysis as Theorem \ref{thm:arc_without_pe}. Our adversarial generalization bound is characterized by the term $\Phi(\varepsilon, t, d)$, which is an increasing function of the attack strength $\varepsilon$ under $\varepsilon \leq \sqrt{t}-\sqrt{d}$. For any $\varepsilon \ge 0$, the term $\Phi(\varepsilon, t, d)$ becomes greater than 1, thus amplifying the overall complexity bound. This directly explains why the generalization gap is systematically larger under adversarial attack compared to the clean setting. Moreover, following Theorem \ref{thm:arc_with_pe}, the inclusion of a trainable PE module explicitly increases the generalization gap, which is the same as the standard setting. 

\paragraph{Different Attack Strengths.}
Finally, we provide empirical evidence for our main theoretical contribution on ARC. Figure~\ref{fig:different_attack} plots the attacked generalization gap versus context length $t$, with each curve representing a different attack strength $\varepsilon$.
\begin{figure}[ht]
    \centering
    \includegraphics[width=\columnwidth]{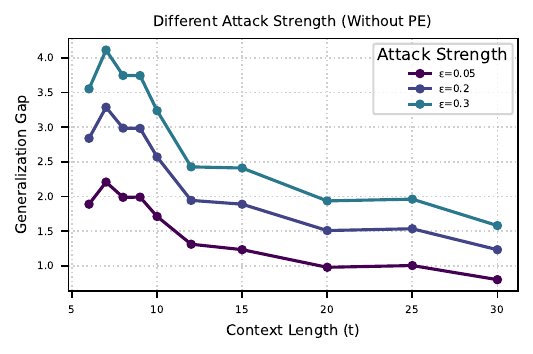}
    \caption{Mean Attacked Generalization Gap vs. Context Length ($t$) for different attack strengths ($\varepsilon$).}
    \label{fig:different_attack}
\end{figure}

For every attack strength $\varepsilon$ (e.g., 0.05, 0.2, 0.3), the generalization gap decreases as the context length increases. This demonstrates that the ICL mechanism remains effective in utilizing longer contexts to enhance generalization ability, even under adversarial attacks. Moreover, as $\varepsilon$ increases, the generalization gap also increases, which is aligned with Theorem \ref{thm:arc_without_pe}.

\subsection{Comparison with RoPE}
While our primary focus is on the impact of learnable parameters in PE, we also implement fixed encodings such as Rotary Position Encodings (RoPE) for comparison. Our theory suggests that since RoPE lacks learnable positional parameters, its complexity should be similar as the No-PE case. As detailed in Appendix~\ref{app:RoPE}, our experiments confirm this: RoPE shows a generalization gap comparable to the "No-PE" baseline, which is consistently lower than that of Trainable PE. This further validates that the increased generalization gap discussed in Section \ref{sec:simulation_PE} is indeed a consequence of the parameter complexity introduced by trainable PEs.

\section{Conclusion}
In this work, we provide a theoretical analysis of the impact of trainable Positional Encodings (PE) on the generalization and adversarial robustness of a one-layer Transformer performing ICL. Our analysis demonstrates that the increased model complexity from a trainable PE leads to a wider generalization gap. Furthermore, by deriving the ARC bounds for this setting, we quantified how this vulnerability is amplified under adversarial attack. Our theoretical findings were validated by empirical evidence.

There are several future directions. First, we only consider the simplest linear regression task. One direction is to extend the problem into other, more complicated tasks and even into real-world applications. Second, a natural extension is to consider understanding how positional encodings scale in multi-layer or deep Transformer architectures. Finally, with the complexity terms such as $L_f$ and $D$ identified in our bounds, one may develop new regularization techniques that directly penalize these terms to design more robust models.

\acknowledgments{
Weiyi He is supported by Open Philanthropy. Yue Xing is supported by NSF DMS 2515194, Open Philanthropy, NVIDIA Academic Grant Program and Google Cloud Research Credit.}

\bibliography{ref}

\section*{Checklist}
\begin{enumerate}

 \item For all models and algorithms presented, check if you include:
 \begin{enumerate}
   \item A clear description of the mathematical setting, assumptions, algorithm, and/or model. [Yes]
   \item An analysis of the properties and complexity (time, space, sample size) of any algorithm. [Yes]
   \item (Optional) Anonymized source code, with specification of all dependencies, including external libraries. [Yes]
 \end{enumerate}

 \item For any theoretical claim, check if you include:
 \begin{enumerate}
   \item Statements of the full set of assumptions of all theoretical results. [Yes]
   \item Complete proofs of all theoretical results. [Yes]
   \item Clear explanations of any assumptions. [Yes]     
 \end{enumerate}

 \item For all figures and tables that present empirical results, check if you include:
 \begin{enumerate}
   \item The code, data, and instructions needed to reproduce the main experimental results (either in the supplemental material or as a URL). [Yes]
   \item All the training details (e.g., data splits, hyperparameters, how they were chosen). [Yes]
         \item A clear definition of the specific measure or statistics and error bars (e.g., with respect to the random seed after running experiments multiple times). [Yes]
         \item A description of the computing infrastructure used. (e.g., type of GPUs, internal cluster, or cloud provider). [Yes]
 \end{enumerate}

 \item If you are using existing assets (e.g., code, data, models) or curating/releasing new assets, check if you include:
 \begin{enumerate}
   \item Citations of the creator If your work uses existing assets. [Not Applicable]
   \item The license information of the assets, if applicable. [Not Applicable]
   \item New assets either in the supplemental material or as a URL, if applicable. [Not Applicable]
   \item Information about consent from data providers/curators. [Not Applicable]
   \item Discussion of sensible content if applicable, e.g., personally identifiable information or offensive content. [Not Applicable]
 \end{enumerate}

 \item If you used crowdsourcing or conducted research with human subjects, check if you include:
 \begin{enumerate}
   \item The full text of instructions given to participants and screenshots. [Not Applicable]
   \item Descriptions of potential participant risks, with links to Institutional Review Board (IRB) approvals if applicable. [Not Applicable]
   \item The estimated hourly wage paid to participants and the total amount spent on participant compensation. [Not Applicable]
 \end{enumerate}

 \end{enumerate}

\appendix
\let\addcontentsline\oldaddcontentsline

\onecolumn

\section*{Appendix}

\setcounter{tocdepth}{2}   
\tableofcontents           
\clearpage

\section{Definitions}
\label{sec:appendix_definitions}

\begin{definition}[Covering Numbers]
Let $(\mathcal{F}, d)$ be a metric space. An $\varepsilon$-cover of a subset $A \subseteq \mathcal{F}$ is a set $\{c_1, \dots, c_N\}$ such that for every $f \in A$, there exists some $c_i$ with $d(f, c_i) \le \varepsilon$. The covering number, $N(A, d, \varepsilon)$, is the size of the smallest possible $\varepsilon$-cover. 
\end{definition}

\begin{definition}[Rademacher Complexity (RC)]
\label{def:RC}
Let $\mathcal{F}$ be a class of real-valued functions and $S = \{x_1, \dots, x_m\}$ be a fixed sample set. The empirical Rademacher complexity of $\mathcal{F}$ on $S$ is:
\begin{equation*}
\operatorname{Rad}_S(\mathcal{F}) \triangleq \frac{1}{m}\mathbb{E}_{\sigma} \left[ \sup_{f \in \mathcal{F}} \sum_{i=1}^m \sigma_i f(x_i) \right],
\end{equation*}
where $\sigma_i$ are i.i.d. and take values $\pm1$ each with half probability and $\sigma=(\sigma_1,\dots,\sigma_n)$.
\end{definition}

\begin{definition}[Adversarial Rademacher Complexity (ARC)]
\label{def:ARC}
Let $\mathcal{F}$ be a function class and $\ell$ be a loss function. The adversarial loss is the worst-case loss under a perturbation of budget $\varepsilon$:
\begin{equation*}
\tilde{\ell}_f(x, y) \triangleq \sup_{\|x'-x\| \le \varepsilon} \ell(f(x'), y).
\end{equation*}
The Adversarial Rademacher Complexity (ARC) is the Rademacher complexity of the corresponding adversarial loss class $\tilde{\mathcal{L}}_{\mathcal{F}} = \{ (x,y) \mapsto \tilde{\ell}_f(x, y) \mid f \in \mathcal{F} \}$:
\begin{equation*}
\operatorname{ARC}_S(\mathcal{F}) \triangleq \operatorname{Rad}_S(\tilde{\mathcal{L}}_{\mathcal{F}}) = \frac{1}{m}\mathbb{E}_{\sigma} \left[ \sup_{f \in \mathcal{F}} \sum_{i=1}^m \sigma_i \tilde{\ell}_f(x_i, y_i) \right].
\end{equation*}
\end{definition}

\begin{definition}
\label{def:norm}
Let $p,q,r,d \in \mathbb{N}$ and let $W \in \mathbb{R}^{r\times d}$. We define three standard matrix norms used throughout our analysis.

The Operator Norm (Induced Norm): The $L_q \to L_p$ operator norm, denoted $\|W\|_{q\rightarrow p}$, measures the maximum amplification factor that the matrix $W$ applies to any vector $x$. It is defined as the supremum:
$$\|W\|_{q\rightarrow p} \triangleq \sup_{x \neq 0} \frac{\|Wx\|_p}{\|x\|_q}.$$
A special case of this is the spectral norm, written as $\|W\|_2$, which corresponds to the $2\rightarrow 2$ norm and is equivalent to the largest singular value of $W$.

The Mixed $(q,p)$ Norm: The $\|W\|_{q,p}$ norm is calculated in a two-step process. First, the $\ell_q$-norm of each of the $d$ columns of $W$ is computed. Then, the $\ell_p$-norm is taken on the resulting $d$-dimensional vector of column norms:
$$\|W\|_{q,p} \triangleq \big\|[\|W_{:,1}\|_q,\dots, \|W_{:,d}\|_q]^\top\big\|_p.$$

The Frobenius Norm: The Frobenius norm, $\|W\|_F$, is the square root of the sum of all squared entries in the matrix, analogous to the Euclidean norm for vectors:
$$\|W\|_F \triangleq \sqrt{\sum_{i=1}^r \sum_{j=1}^d W_{ij}^2}.$$
This norm is also equivalent to the Euclidean norm of the matrix's singular values. Therefore, we have $\|W\|_2\leq \|W\|_F$ for any matrix $W$ .

\end{definition}

\newpage

\section{Useful Lemmas}
In this section, we list some useful lemmas, which are classic results and can be found in \citet{vershynin2018high}.
\begin{lemma}[Diameter of an Ellipsoid]
\label{lemma:DiameterEllipsoid}
For the ellipsoid $\{ w : (w - w_0)^\top A (w - w_0) \leq r \}$ where $A$ is symmetric positive definite, the diameter under the Euclidean norm is 
\begin{equation*}
2 \sqrt{r / \lambda_{\min}(A)},
\end{equation*}
where $\lambda_{\min}(A)$ is the smallest eigenvalue of $A$.
\end{lemma}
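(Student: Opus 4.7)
The plan is to use the translation invariance of the Euclidean distance plus an orthogonal change of coordinates that diagonalizes $A$. First, I would substitute $u = w - w_0$, so that the set becomes the centered ellipsoid $E = \{u \in \mathbb{R}^d : u^\top A u \le r\}$. Since $\|w_1 - w_2\|_2 = \|u_1 - u_2\|_2$, the diameter of the original set equals the diameter of $E$. Because $E$ is symmetric about the origin, a standard triangle-inequality argument gives $\operatorname{Diam}(E) = 2\sup_{u \in E} \|u\|_2$: indeed, for any $u_1, u_2 \in E$ we have $\|u_1 - u_2\|_2 \le \|u_1\|_2 + \|u_2\|_2 \le 2\sup_{u \in E}\|u\|_2$, and conversely $2\|u\|_2 = \|u - (-u)\|_2$ with $-u \in E$.

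Next, I would spectrally decompose $A = Q\Lambda Q^\top$ with $Q$ orthogonal and $\Lambda = \operatorname{diag}(\lambda_1, \ldots, \lambda_d)$, $\lambda_i > 0$. Setting $v = Q^\top u$ leaves the Euclidean norm invariant, $\|v\|_2 = \|u\|_2$, and rewrites the constraint as $\sum_{i=1}^d \lambda_i v_i^2 \le r$. Thus the problem reduces to
\begin{equation*}
\max \sum_{i=1}^d v_i^2 \quad \text{subject to} \quad \sum_{i=1}^d \lambda_i v_i^2 \le r.
\end{equation*}
Writing $t_i = v_i^2 \ge 0$, this is a linear program in $t$, and its optimum is attained by concentrating all mass on the coordinate with the smallest $\lambda_i$: take $t_{i^*} = r/\lambda_{\min}(A)$ and $t_i = 0$ otherwise, where $i^* = \arg\min_i \lambda_i$. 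This gives $\sup_{u \in E} \|u\|_2^2 = r/\lambda_{\min}(A)$.

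Combining the two steps, $\operatorname{Diam}(\{w : (w-w_0)^\top A (w-w_0) \le r\}) = 2\sqrt{r/\lambda_{\min}(A)}$. Tightness is witnessed by the pair $w_0 \pm \sqrt{r/\lambda_{\min}(A)}\, q_{i^*}$, where $q_{i^*}$ is the unit eigenvector of $A$ corresponding to $\lambda_{\min}(A)$. There is no real obstacle in this proof; the only point requiring minor care is justifying $\operatorname{Diam}(E) = 2\sup_{u\in E}\|u\|_2$ from the symmetry $E = -E$, which the argument above handles in one line.
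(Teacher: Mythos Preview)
Your proof is correct and complete. The paper does not actually supply a proof of this lemma; it lists it among ``classic results'' and cites \citet{vershynin2018high}, so your argument via centering, the symmetry identity $\operatorname{Diam}(E)=2\sup_{u\in E}\|u\|_2$, and spectral diagonalization is a self-contained justification where the paper offers only a reference.
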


\begin{lemma}[Lower Bound for Minimum Singular Value]
\label{lemma:MinSingularValue}
Let $A$ be an $n \times d$ random matrix with entries i.i.d. $\mathcal{N}(0,1)$ and $n > d$. Then for every $k > 0$,
\begin{equation*}
\mathbb{P} \left( \sigma_{\min}(A) \leq \sqrt{n} - \sqrt{d} - k \right) \leq e^{-k^2 / 2}.
\end{equation*}
\end{lemma}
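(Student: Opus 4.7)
The plan is to follow the classical two-step route from non-asymptotic random matrix theory (as in Vershynin's book): first pin down the expectation $\mathbb{E}[\sigma_{\min}(A)]$ from below by $\sqrt{n}-\sqrt{d}$, and then deploy Gaussian Lipschitz concentration to upgrade this mean bound into the desired exponential tail bound.

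For the first step, I would write the smallest singular value via its min-max characterization
\[
\sigma_{\min}(A) = \min_{u \in S^{d-1}} \max_{v \in S^{n-1}} v^\top A u,
\]
and recognize $\{v^\top A u\}_{(u,v) \in S^{d-1}\times S^{n-1}}$ as a centered Gaussian process indexed by the product of two spheres, with increments controlled by $\sqrt{\|u-u'\|_2^2 + \|v-v'\|_2^2}$. I would then invoke Gordon's Gaussian min-max comparison theorem to compare this against the simpler decoupled Gaussian process $g^\top u + h^\top v$, where $g \sim \mathcal{N}(0, I_d)$ and $h \sim \mathcal{N}(0, I_n)$ are independent. Taking expectations gives
\[
\mathbb{E}[\sigma_{\min}(A)] \;\geq\; \mathbb{E}\|h\|_2 - \mathbb{E}\|g\|_2 \;\geq\; \sqrt{n} - \sqrt{d},
\]
where the last inequality uses $\mathbb{E}\|h\|_2 \geq \sqrt{n}/\sqrt{1+1/(2n)}\approx\sqrt{n}$ more carefully, or the cleaner bound $\mathbb{E}\|h\|_2 \geq \sqrt{n} - 1/(2\sqrt{n})$ together with Jensen—one can match the stated form by using the sharper variants of these moment estimates.

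For the second step, I would view $A \mapsto \sigma_{\min}(A)$ as a function of the $nd$ i.i.d.\ $\mathcal{N}(0,1)$ entries. Since singular values are $1$-Lipschitz with respect to the spectral norm and the spectral norm is dominated by the Frobenius norm, the map is $1$-Lipschitz with respect to the Euclidean metric on $\mathbb{R}^{nd}$. The standard Gaussian concentration inequality for Lipschitz functions then yields, for every $k>0$,
\[
\mathbb{P}\!\left(\sigma_{\min}(A) \leq \mathbb{E}[\sigma_{\min}(A)] - k\right) \;\leq\; e^{-k^2/2}.
\]
Combining this with the lower bound on the expectation from Step 1 gives the claimed tail inequality.

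The main obstacle is Step 1: Gordon's inequality is a nontrivial Gaussian comparison result, and matching the clean expression $\sqrt{n}-\sqrt{d}$ (rather than a version with lower-order correction terms) requires a careful choice of the comparison process and sharp moment estimates for $\chi$-distributed norms. Step 2 is essentially mechanical once Step 1 is established, since Gaussian concentration for Lipschitz functions is standard and the $1$-Lipschitz property of $\sigma_{\min}$ follows immediately from Weyl's inequality. Because the lemma is quoted as a classical result from \citet{vershynin2018high}, in the write-up I would either cite the theorem directly or sketch the Gordon comparison and concentration steps at the level described above.
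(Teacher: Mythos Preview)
The paper does not prove this lemma at all: it is listed in the ``Useful Lemmas'' appendix with the blanket remark that these are classical results from \citet{vershynin2018high}, and no argument is given. Your sketch is the standard textbook route (Gordon's Gaussian min--max comparison for the expectation, then Gaussian Lipschitz concentration for the tail), and you already note at the end that one could simply cite the theorem directly---which is exactly what the paper does.

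One technical caveat worth flagging in your Step~1: Gordon's inequality yields $\mathbb{E}[\sigma_{\min}(A)] \geq \mathbb{E}\|h\|_2 - \mathbb{E}\|g\|_2$, but both $\mathbb{E}\|h\|_2 \leq \sqrt{n}$ and $\mathbb{E}\|g\|_2 \leq \sqrt{d}$ by Jensen, so the difference is not obviously at least $\sqrt{n}-\sqrt{d}$. The clean stated form (without lower-order corrections) comes from applying Gordon's inequality directly at the tail level (as in Davidson--Szarek) rather than bounding the mean first and then concentrating; your write-up hints at this difficulty but slightly misdiagnoses where it sits. Since you plan to cite the result anyway, this does not affect the overall correctness of your proposal.
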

 
\begin{lemma}[Covering Number for Balls]
\label{lemma:covering number for balls}
Let $\Theta \subset V=\mathbb{R}^d$. Then
\begin{equation*}
\left( \frac{1}{\varepsilon} \right)^d \frac{\operatorname{vol(\Theta)}}{\operatorname{vol}(B)}\leq N(\Theta,||\cdot||_2,\varepsilon)\leq\left( \frac{3}{\varepsilon} \right)^d \frac{\operatorname{vol(\Theta)}}{\operatorname{vol}(B)},
\end{equation*}
where $B$ is the unit norm ball.
\end{lemma}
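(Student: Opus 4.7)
The plan is to prove the two inequalities by the classical volume-comparison argument that couples Lebesgue measure on $\mathbb{R}^d$ with the Euclidean metric. Both directions follow a common template: relate $N(\Theta, \|\cdot\|_2, \varepsilon)$ to the ratio $\operatorname{vol}(\Theta)/\operatorname{vol}(B)$ either by covering $\Theta$ with $\varepsilon$-balls (for the lower bound) or by packing it with disjoint balls of radius $\varepsilon/2$ (for the upper bound).

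For the lower bound, I would start from any $\varepsilon$-cover $\{c_1,\dots,c_N\}\subseteq\mathbb{R}^d$ of $\Theta$, so that $\Theta\subseteq\bigcup_{i=1}^N B(c_i,\varepsilon)$ by definition. Taking Lebesgue measure on both sides, using translation invariance, and applying the scaling identity $\operatorname{vol}(\varepsilon B)=\varepsilon^d\operatorname{vol}(B)$ immediately yields $\operatorname{vol}(\Theta)\le N\,\varepsilon^d\operatorname{vol}(B)$. Minimizing $N$ over admissible covers and rearranging gives $N(\Theta,\|\cdot\|_2,\varepsilon)\ge(1/\varepsilon)^d\operatorname{vol}(\Theta)/\operatorname{vol}(B)$.

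For the upper bound, I would construct a valid cover from a maximal packing. Let $\{c_1,\dots,c_N\}\subseteq\Theta$ be a maximal $\varepsilon$-separated subset, i.e., $\|c_i-c_j\|_2\ge\varepsilon$ for all $i\ne j$ with no additional point of $\Theta$ addable while preserving this property. Maximality forces every $x\in\Theta$ to lie within distance $\varepsilon$ of some $c_i$, so the packing is automatically an $\varepsilon$-cover and hence $N(\Theta,\|\cdot\|_2,\varepsilon)\le N$. Because the centers are $\varepsilon$-separated, the open balls $B(c_i,\varepsilon/2)$ are pairwise disjoint, and their union is contained in the Minkowski sum $\Theta+(\varepsilon/2)B$. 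Volume comparison therefore yields $N(\varepsilon/2)^d\operatorname{vol}(B)\le\operatorname{vol}(\Theta+(\varepsilon/2)B)$. Combining this with the enlargement estimate $\operatorname{vol}(\Theta+(\varepsilon/2)B)\le(3/2)^d\operatorname{vol}(\Theta)$ and rearranging produces $N\le(3/\varepsilon)^d\operatorname{vol}(\Theta)/\operatorname{vol}(B)$.

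The main obstacle is precisely the enlargement step $\operatorname{vol}(\Theta+(\varepsilon/2)B)\le(3/2)^d\operatorname{vol}(\Theta)$; for a generic subset $\Theta\subseteq\mathbb{R}^d$ this ratio can be arbitrarily large (for instance, if $\Theta$ is a lower-dimensional slab). The clean form stated here implicitly uses that the $\Theta$ actually invoked in Lemma~\ref{lemma:CoveringNumber_bound} is a Euclidean ball $B(\theta_0,R)$, for which the Minkowski sum is again a ball of radius $R+\varepsilon/2$ and the ratio equals $(1+\varepsilon/(2R))^d\le(3/2)^d$ in the non-trivial regime $\varepsilon\le R$; outside this regime the bound collapses to $N=1$ since a single center covers $\Theta$. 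Once this regularity observation is secured, the rest of the proof is a one-line rearrangement of the volume inequality.
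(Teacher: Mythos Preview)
The paper does not prove this lemma; it is listed among the ``Useful Lemmas'' that are ``classic results and can be found in \citet{vershynin2018high}.'' Your volume-comparison argument is exactly the standard proof from that reference, so there is nothing to compare.

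One remark worth keeping: your observation that the enlargement inequality $\operatorname{vol}(\Theta+(\varepsilon/2)B)\le(3/2)^d\operatorname{vol}(\Theta)$ fails for general measurable $\Theta$ is correct and not merely a technicality---the lemma as literally stated (for arbitrary $\Theta\subset\mathbb{R}^d$) is false on the upper-bound side. Your diagnosis that the intended application is to a Euclidean ball (where the Minkowski sum is again a ball and the ratio is controlled) is exactly right, and matches how the paper actually uses the lemma in the proof of Lemma~\ref{lemma:CoveringNumber_bound}.
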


\begin{lemma}[Talagrand's Contraction Lemma (General Vector Form)]
\label{lemma:Talagrand's contraction}
Let $\mathcal{A} \subseteq \mathbb{R}^m$ be a set of vectors. For each $i=1, \dots, m$, let $\phi_i: \mathbb{R} \to \mathbb{R}$ be a $\rho$-Lipschitz function. Define the mapping $\Phi: \mathbb{R}^m \to \mathbb{R}^m$ which applies each function coordinate-wise:
\begin{equation*}
\Phi({a}) = (\phi_1(a_1), \phi_2(a_2), \dots, \phi_m(a_m)) \quad \text{for any} \quad {a} = (a_1, \dots, a_m) \in \mathcal{A}.
\end{equation*}
Let $\Phi \circ \mathcal{A} \triangleq \{ \Phi({a}) : {a} \in \mathcal{A} \}$ be the set of transformed vectors. Then the empirical Rademacher complexity of the transformed set is bounded by the complexity of the original set:
\begin{equation*}
\operatorname{Rad}_m(\Phi \circ \mathcal{A}) \le \rho \cdot \operatorname{Rad}_m(\mathcal{A}).
\end{equation*}
\end{lemma}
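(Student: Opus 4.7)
The plan is to prove this by induction on the coordinate index, peeling off one Lipschitz function at a time. Writing $m\cdot\operatorname{Rad}_m(\Phi\circ\mathcal{A}) = \mathbb{E}_\sigma\!\left[\sup_{a\in\mathcal{A}}\sum_{i=1}^m \sigma_i\phi_i(a_i)\right]$, it suffices to prove the single-coordinate statement: for any fixed values of $\sigma_1,\dots,\sigma_{i-1},\sigma_{i+1},\dots,\sigma_m$ and any $\rho$-Lipschitz $\phi_i$,
\begin{equation*}
\mathbb{E}_{\sigma_i}\!\left[\sup_{a\in\mathcal{A}} T_i(a)+\sigma_i\phi_i(a_i)\right]\;\le\;\mathbb{E}_{\sigma_i}\!\left[\sup_{a\in\mathcal{A}} T_i(a)+\rho\,\sigma_i a_i\right],
\end{equation*}
where $T_i(a)\triangleq \sum_{j\neq i}\sigma_j\phi_j(a_j)$ absorbs the other coordinates. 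Iterating this inequality for $i=1,2,\dots,m$ replaces every $\phi_i$ by the linear map $a_i\mapsto \rho a_i$, after which pulling $\rho$ out of the sup and the Rademacher expectation yields $\operatorname{Rad}_m(\Phi\circ\mathcal{A})\le \rho\,\operatorname{Rad}_m(\mathcal{A})$.

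The core of the proof is the single-coordinate inequality. First I would expand the expectation over $\sigma_i\in\{\pm1\}$, giving
\begin{equation*}
\tfrac{1}{2}\sup_{a\in\mathcal{A}}\{T_i(a)+\phi_i(a_i)\}+\tfrac{1}{2}\sup_{a'\in\mathcal{A}}\{T_i(a')-\phi_i(a'_i)\}=\tfrac{1}{2}\sup_{a,a'\in\mathcal{A}}\bigl\{T_i(a)+T_i(a')+\phi_i(a_i)-\phi_i(a'_i)\bigr\}.
\end{equation*}
The $\rho$-Lipschitz property gives $\phi_i(a_i)-\phi_i(a'_i)\le \rho|a_i-a'_i|$. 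The key observation is a symmetry argument: the joint supremum is symmetric in the pair $(a,a')$ (swapping $a\leftrightarrow a'$ and then flipping the sign of the residual is valid since the supremum includes both orderings), so the absolute value can be removed to obtain $\tfrac{1}{2}\sup_{a,a'}\bigl\{T_i(a)+T_i(a')+\rho(a_i-a'_i)\bigr\}$. Unfolding the joint supremum back into two separate suprema recovers exactly $\mathbb{E}_{\sigma_i}[\sup_{a} T_i(a)+\rho\sigma_i a_i]$.

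I would then close the induction by applying the inequality sequentially, noting that each application only modifies one coordinate and leaves the treatment of the remaining coordinates unchanged; after $m$ applications every $\phi_i$ is replaced by $\rho\,\mathrm{id}$. Finally, by linearity I can pull $\rho$ out:
\begin{equation*}
\mathbb{E}_\sigma\!\left[\sup_{a\in\mathcal{A}}\sum_{i=1}^m \rho\,\sigma_i a_i\right]=\rho\,\mathbb{E}_\sigma\!\left[\sup_{a\in\mathcal{A}}\sum_{i=1}^m \sigma_i a_i\right],
\end{equation*}
which divided by $m$ gives the claimed bound.

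The main obstacle is the symmetry step that removes the absolute value $|a_i-a'_i|$: one must justify that the supremum over the ordered pair $(a,a')$ of $T_i(a)+T_i(a')+\rho|a_i-a'_i|$ equals the supremum without the absolute value. Because the expression is symmetric under interchanging $a$ and $a'$, any pair for which $a_i<a'_i$ produces the same value of $T_i(a)+T_i(a')$ as its swap, and the swap converts $|a_i-a'_i|$ into the unsigned difference $a_i-a'_i$; hence the two suprema coincide. A subtle point worth recording is that the bound does not require each $\phi_i$ to satisfy $\phi_i(0)=0$ (unlike the usual one-sided contraction lemma), because any constant shift $\phi_i\mapsto\phi_i+c$ vanishes in expectation against $\sigma_i$. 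Once this step is clean, the rest is a straightforward induction and the conclusion follows.
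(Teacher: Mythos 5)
Your proof is correct. Note that the paper itself does not prove this lemma: it is listed in the appendix under ``Useful Lemmas'' and attributed to the literature as a classic result (the cited reference is \citet{vershynin2018high}; the argument you give is the one found in Ledoux--Talagrand and in standard learning-theory texts). So there is no in-paper proof to compare against, but your self-contained derivation is the canonical one: conditionally peel off one coordinate, expand $\mathbb{E}_{\sigma_i}$ into the two signed suprema, merge them into a joint supremum over pairs $(a,a')$, apply the Lipschitz bound $\phi_i(a_i)-\phi_i(a'_i)\le\rho|a_i-a'_i|$, and remove the absolute value by the swap symmetry of the pair supremum before unfolding back. All steps are justified, including the symmetry step you flag as the main obstacle, and the iteration over $i=1,\dots,m$ is valid because the single-coordinate inequality holds for an arbitrary residual term $T_i$ (which at intermediate stages mixes already-linearized and not-yet-processed coordinates). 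Your closing remark that $\phi_i(0)=0$ is not needed is also accurate, though strictly superfluous here since the argument only ever uses differences $\phi_i(a_i)-\phi_i(a'_i)$.
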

\begin{remark}
In many machine learning applications, all coordinate functions are the same, i.e., $\phi_1 = \phi_2 = \dots = \phi_m = \phi$. The lemma still holds with $\rho$ being the Lipschitz constant of the single function $\phi$.
\end{remark}

\begin{lemma}[Weyl Inequality]
\label{lemma:weyl_inequality}
For any matrix $A$ and $B$, we have the following Weyl inequality:
\begin{equation*}
    |\sigma_i(A+B)-\sigma_i(A)|\leq \|B\|_2,
\end{equation*}
where $\sigma_i$ is the singular value. A direct corollary is that
\begin{equation*}
\sigma_{\text{min}}(A+B) \ge \sigma_{\text{min}}(A) - ||B||_2,
\end{equation*}
where $||B||_2$ is the spectral norm and $\sigma_{\min}$ is the smallest singular value.
\end{lemma}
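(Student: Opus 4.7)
The plan is to prove this classical perturbation inequality by leveraging the Courant--Fischer min-max variational characterization of singular values. Specifically, for any matrix $M \in \mathbb{R}^{r\times c}$ and any index $1 \le i \le \min(r,c)$, one has
\begin{equation*}
\sigma_i(M) = \min_{\substack{V \subseteq \mathbb{R}^c \\ \dim V = c-i+1}} \max_{\substack{x \in V \\ \|x\|_2 = 1}} \|Mx\|_2.
\end{equation*}
The key insight is that an additive perturbation $B$ of the input matrix can only shift each $\|Mx\|_2$ by at most $\|B\|_2$ uniformly in $x$, which preserves the min-max structure and therefore transfers cleanly into a bound on $\sigma_i$.

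First, I would fix any subspace $V \subseteq \mathbb{R}^c$ with $\dim V = c - i + 1$ and any unit vector $x \in V$, and apply the triangle inequality together with the definition of the spectral norm to obtain $\|(A+B)x\|_2 \le \|Ax\|_2 + \|Bx\|_2 \le \|Ax\|_2 + \|B\|_2$. Second, I would take the maximum over unit vectors $x \in V$ on both sides, and then the infimum over all admissible subspaces $V$; by the min-max identity above, this yields $\sigma_i(A+B) \le \sigma_i(A) + \|B\|_2$. Third, by symmetry, applying the same argument with $A$ replaced by $A+B$ and $B$ replaced by $-B$ (noting $\|-B\|_2 = \|B\|_2$) gives $\sigma_i(A) \le \sigma_i(A+B) + \|B\|_2$. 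Combining the two directions yields the stated inequality $|\sigma_i(A+B) - \sigma_i(A)| \le \|B\|_2$. The corollary on $\sigma_{\min}$ then follows by specializing $i = \min(r,c)$ and rearranging.

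There is essentially no substantive obstacle in this proof since it is a standard result in matrix analysis; the only point requiring mild care is selecting the right variant of Courant--Fischer for rectangular matrices (the subspaces must live on the domain side $\mathbb{R}^c$, not the codomain) and verifying that the argument is uniform in $i$. An alternative route would proceed via the dilation trick, embedding $A$ and $B$ into Hermitian block matrices $\bigl(\begin{smallmatrix} 0 & A \\ A^\top & 0 \end{smallmatrix}\bigr)$ whose eigenvalues coincide (up to sign and zeros) with the singular values of $A$, and then invoking the Hermitian Weyl inequality for eigenvalues, but the direct variational approach above is cleaner and self-contained.
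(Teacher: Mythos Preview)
Your proof is correct and self-contained; the Courant--Fischer min-max argument you outline is the standard route to the singular-value version of Weyl's inequality, and your handling of the rectangular case and the symmetry step are both sound. The paper itself does not prove this lemma: it is listed among the ``useful lemmas'' that are simply cited as classical results from \citet{vershynin2018high}, so there is no in-paper proof to compare against. Your write-up therefore goes beyond what the paper provides.
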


\newpage
\section{Proofs}
\subsection{Justification for Definition \ref{def:effective_param_space}}
\label{app:proof_of_def}
\begin{proof}
Under the condition of Theorem \ref{thm:rc_no_pe_final}, we have 
\begin{equation*}
    \|W_{QK}-W^*_{QK}(S_t)\|_F \leq \|W_{QK}-W^*_{QK}\|_F+\|W^*_{QK}-W^*_{QK}(S_t)\|_F\leq \gamma+\Delta_t.
\end{equation*}
Then we prove $\Delta_t\lesssim O(\sqrt{d/t})$ with high probability. 

Let $G_t := \frac{1}{t}\sum_{i=1}^t x_i x_i^\top$ with $x_i \stackrel{i.i.d.}{\sim}\mathcal N(0,I_d)$, and let
\begin{equation*}
W^*_{QK}(S_t) = \Psi(G_t), \qquad W^*_{QK} = \Psi(I_d).
\end{equation*}
We only require $\Psi$ to be locally Lipschitz in the spectral norm near $I_d$.

\begin{assumption}
\label{assump:psi_lipschitz}
There exists $L_\Psi>0$ and a neighborhood $\mathcal U$ of $I_d$ such that for all $G,G'\in\mathcal U$,
\begin{equation*}
\big\|\Psi(G)-\Psi(G')\big\|_F \le L_\Psi\,\big\|G-G'\big\|_2.
\end{equation*}
\end{assumption}

Assumption~\ref{assump:psi_lipschitz} is mild and holds for a broad class of smooth matrix functionals. It can be justified via Fr\'echet differentiability of $\Psi$ and a uniform bound on $\|\nabla\Psi\|$ in a neighborhood of $I_d$.

\begin{proposition}
\label{prop:delta_t_bound}
Under Assumption~\ref{assump:psi_lipschitz}, for any $\delta\in(0,1)$, with probability at least $1-\delta$, 
\begin{equation*}
\Delta_t := \big\|W^*_{QK}(S_t)-W^*_{QK}\big\|_F 
\le L_\Psi\,\big\|G_t-I_d\big\|_2
\le C\,L_\Psi\!\left(\sqrt{\frac{d}{t}}+\sqrt{\frac{\log(1/\delta)}{t}}\right),
\end{equation*}
where $C>0$ is a constant. In particular, $\Delta_t = O(\sqrt{d/t})$.
\end{proposition}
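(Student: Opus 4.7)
The plan is to split the bound into two independent pieces: a deterministic Lipschitz step that comes directly from Assumption~\ref{assump:psi_lipschitz}, followed by a probabilistic concentration bound on $\|G_t - I_d\|_2$ via standard sample-covariance tail inequalities.

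First, I would argue the left-hand inequality. On the event that $G_t$ lies in the neighborhood $\mathcal{U}$ of $I_d$, Assumption~\ref{assump:psi_lipschitz} gives immediately
\begin{equation*}
\Delta_t = \|\Psi(G_t) - \Psi(I_d)\|_F \le L_\Psi\,\|G_t - I_d\|_2 .
\end{equation*}
So the whole probabilistic content of the statement reduces to controlling $\|G_t - I_d\|_2$ and to ensuring that this quantity is small enough (with high probability) for $G_t$ to fall inside $\mathcal{U}$.

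Second, I would invoke a standard Gaussian sample-covariance concentration bound. Since each matrix $x_i x_i^\top - I_d$ has entries that are sub-exponential with bounded Orlicz parameters, an $\varepsilon$-net argument on the unit sphere of $\mathbb{R}^d$ combined with Bernstein's inequality (as presented in Vershynin, Theorem 4.6.1) yields, for any $s > 0$, with probability at least $1 - 2e^{-s}$,
\begin{equation*}
\|G_t - I_d\|_2 \;\le\; C\!\left(\sqrt{\frac{d+s}{t}} + \frac{d+s}{t}\right).
\end{equation*}
Setting $s = \log(2/\delta)$, the tail probability becomes $\delta$. In the regime $t \gtrsim d + \log(1/\delta)$ the quadratic term is dominated by the square-root term, so after absorbing constants I get the stated
\begin{equation*}
\|G_t - I_d\|_2 \;\le\; C\!\left(\sqrt{\frac{d}{t}} + \sqrt{\frac{\log(1/\delta)}{t}}\right).
\end{equation*}

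Third, I would stitch the two pieces together. Because the right-hand side of the concentration bound vanishes as $t \to \infty$, for any prescribed neighborhood $\mathcal{U}$ there exists $t_{\min}$ (depending on $d$, $\delta$, and the radius of $\mathcal{U}$) such that for all $t \ge t_{\min}$ the event $\{G_t \in \mathcal{U}\}$ holds on the same high-probability event, so Assumption~\ref{assump:psi_lipschitz} applies and the chain of inequalities closes. The only subtle point is this compatibility between the Lipschitz neighborhood and the probabilistic bound; it is easily handled either by restricting to $t \ge t_{\min}$ as in Definition~\ref{def:effective_param_space}, or by truncating $G_t$ onto $\mathcal{U}$ and noting that truncation is inactive on the good event. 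Multiplying through by $L_\Psi$ yields the stated bound, and the final claim $\Delta_t = O(\sqrt{d/t})$ follows by treating $\delta$ as a constant.
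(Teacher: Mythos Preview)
Your proposal is correct and follows essentially the same two-step approach as the paper: apply Assumption~\ref{assump:psi_lipschitz} for the deterministic inequality, then invoke standard sample-covariance spectral concentration for the probabilistic bound on $\|G_t - I_d\|_2$. If anything, you are more careful than the paper in explicitly addressing the neighborhood $\mathcal{U}$ compatibility and in citing a specific concentration result; the paper's proof simply asserts the Wishart/sample-covariance tail bound without these details.
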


\begin{proof}
The first inequality follows immediately from Assumption \ref{assump:psi_lipschitz} by taking $G=G_t$ and $G'=I_d$:
\begin{equation*}
\Delta_t = \big\|\Psi(G_t)-\Psi(I_d)\big\|_F \le L_\Psi\,\big\|G_t-I_d\big\|_2.
\end{equation*}
For the second inequality, note that $tG_t\sim \text{Wishart}(d,t)$ with mean $I_d$. Standard spectral concentration for sample covariance matrices yields, for some absolute constant $C>0$ and any $\delta\in(0,1)$,
\begin{equation*}
\big\|G_t-I_d\big\|_2 \le C\left(\sqrt{\frac{d}{t}}+\sqrt{\frac{\log(1/\delta)}{t}}\right),
\end{equation*}
with probability at least $1-\delta$. Combining the two inequalities to get the results.
\end{proof}

Proposition \ref{prop:delta_t_bound} shows that the only term depending on the context length $t$ is factor $\sqrt{d/t}$ (up to a standard $\sqrt{\log(1/\delta)/t}$). All other quantities ($L_\Psi$, $C$) are constants and do not affect the decay profile in $t$. Hence, if we fix any $t_{\min}$ and work on the high-probability event, for all $t\ge t_{\min}$, we may set
\begin{equation*}
    \Delta_* := \sup_{t\ge t_{\min}} \Delta_t 
    = O\Big(L_\Psi\,\sqrt{d/t_{\min}}\Big),
\end{equation*}
and absorb $\Delta_*$ into $\gamma_{\text{eff}}=\gamma+\Delta_*$. This ensures that all subsequent constants remain $t$-independent.
\end{proof}

\subsection{Proof of Lemmas in Section \ref{sec:RC_for_TF}} 
\label{app:proof_of_lemmas}
In this section, we first prove the lemmas used in Section \ref{sec:RC_for_TF}, which are Lemma \ref{lemma:RC_dudley}, Lemma \ref{lemma:CoveringNumber_bound}, Lemma \ref{lemma:para_bound} and Lemma \ref{lemma:L_G_and_r}. 

\paragraph{Proof of Lemma \ref{lemma:RC_dudley}}
\begin{proof}
This bound follows from the classical Dudley entropy integral \citep{vershynin2018high}.
\end{proof}

\paragraph{Proof of Lemma \ref{lemma:CoveringNumber_bound}}
\label{app:proof_covering_bound}
\begin{proof}
We prove Lemma \ref{lemma:CoveringNumber_bound} in three steps: (i) a peeling reduction by
contraction, (ii) a Lipschitz link from parameters to the attention output, and (iii) covering
numbers for Euclidean balls and the translation from parameters to functions.

For the single-layer, single-head Transformer defined in Section~\ref{sec:TF_structure}, we analyze the parameter pair $\theta=(W_{\text{in}},W_{QK})\in\Theta_{S_t}$ defined in
Definition~\ref{def:effective_param_space}:
\[
\Theta_{S_t}=\Big\{(W_{\text{in}},W_{QK})\ :\ \|W_{QK}-W^*_{QK}(S_t)\|_F\le \gamma\Big\}.
\]
Other components $(W_V,W_c)$ are trained but uniformly bounded as in
Section~\ref{sec:TF_structure}: $\|W_c^\top\|_{1,\infty}\le B_{W_c}$ and $\|W_V^\top\|_{1,\infty}\le B_{W_V}$,
and $\sigma$ is $L_\sigma$-Lipschitz.

\paragraph{(i) Peeling via contraction.}
Apply Lemma \ref{lemma:Talagrand's contraction} layer by layer:
the composition of a bounded linear readout ($W_c$), a Lipschitz activation ($\sigma$),
and a bounded linear map ($W_V$) gives a global multiplicative constant
$B_{W_c}L_\sigma B_{W_V}$ relating the Rademacher complexity of the full function class and that of the attention block. Formally,
\begin{equation*}
\operatorname{Rad}_S(\mathcal{F}_{S_t})
\le B_{W_c}\,L_\sigma\,B_{W_V}\ \operatorname{Rad}_S(\mathcal U_{S_t}),   
\end{equation*}
where
\begin{equation*}
\mathcal U_{S_t}=\{ X\mapsto {h}_{\text{attn}}(X;W_{\text{in}},W_{QK})
:(W_{\text{in}},W_{QK})\in\Theta_{S_t}\},\qquad
{h}_{\text{attn}}=\text{RowSoftmax}\left(\frac{H W_{QK} H^\top}{\sqrt{d_m}}\right) H 
\end{equation*}
with $H=XW_{\text{in}}$.

\paragraph{(ii) Lipschitz link for the attention block.}
We show that the map $(W_{\text{in}},W_{QK})\mapsto {h}_{\text{attn}}$ is Lipschitz.
Let $\theta_1=(W_{\text{in},1},W_{QK,1})$ and $\theta_2=(W_{\text{in},2},W_{QK,2})$ in $\Theta_{S_t}$.
Set $H_i=XW_{\text{in},i}$ and
\begin{equation*}
M_i = \frac{1}{\sqrt{d_m}}H_i W_{QK,i} H_i^\top,\qquad
A_i = \operatorname{RowSoftmax}(M_i),\qquad i=1,2.
\end{equation*}
In the empirical norm, we have
\begin{equation*}
\|{h}_{\text{attn}}(\cdot;\theta_1)-{h}_{\text{attn}}(\cdot;\theta_2)\|_{m,2}
\;\le\; L_{\text{attn}}\ \|\theta_1-\theta_2\|_2,
\end{equation*}
with $L_{\text{attn}}$ depending on uniform bounds of $\|H\|_{2}$ and $\|W_{QK}\|_{2}$ over $\Theta_{S_t}$.
(These are controlled on a high-probability well-conditioned event $\mathcal G_t$ under the Gaussian
design; see Proof of Lemma \ref{lemma:para_bound}.) 

Combining with the peeling constant from step (i), we obtain the Lipschitz link
between functions and parameters claimed in the main text:
\begin{equation*}
\|f_{\theta_1}-f_{\theta_2}\|_{m,2}
\ \le\ B_{W_c}\,L_\sigma\,B_{W_V}\,L_{\text{attn}}\ \|\theta_1-\theta_2\|_2
\ \equiv\ L_f\ \|\theta_1-\theta_2\|_2.
\tag{C.1}
\label{eq:C1}
\end{equation*}

\paragraph{(iii) Covering numbers and translation.}
Let $D=d\cdot d_m + d_m^2$ be the ambient dimension of $\theta=(W_{\text{in}},W_{QK})$,
and $D_t=\operatorname{Diam}(\Theta_{S_t})$. Standard Euclidean ball covering from Lemma \ref{lemma:covering number for balls}:
gives
\begin{equation*}
\log N(\Theta_{S_t}, \|\cdot\|_2, \varepsilon) \leq D \log\left( \frac{3 D_t}{\varepsilon} \right).
\end{equation*}

Now \eqref{eq:C1} implies the covering translation:
if $\{\theta_k\}$ is an $(\varepsilon/L_f)$-cover of $\Theta_{S_t}$ in $\|\cdot\|_2$, then
$\{f_{\theta_k}\}$ is an $\varepsilon$-cover of $\mathcal F_{S_t}$ in $\|\cdot\|_{m,2}$, hence
\[
N\big(\mathcal{F}_{S_t},\|\cdot\|_{m,2},\varepsilon\big)
\ \le\ N\big(\Theta_{S_t},\|\cdot\|_2,\varepsilon/L_f\big).
\]
Similarly, taking the supremum over pairs gives
$\operatorname{Diam}(\mathcal{F}_{S_t})\le L_f\,\operatorname{Diam}(\Theta_{S_t})$, where $L_f=B_{W_c}\,L_\sigma\,B_{W_V}\,L_{\text{attn}}$.

\end{proof}

\paragraph{Proof of Lemma \ref{lemma:para_bound}}
\label{app:tail_event}
\begin{proof}

First, we justify the high probability "good event" $\mathcal{G}_t$ as mentioned in Lemma \ref{lemma:para_bound}. We define the tail event $\mathcal{T}_t$ as 
\begin{equation*}
\mathcal{T}_t := \left\{ S_t : \sigma_{\min}(X_t) \leq c_1 \sqrt{t} - c_2 \sqrt{d} \right\}.
\end{equation*}
Standard results from random matrix theory \citep{vershynin2018high} guarantee that for i.i.d. Gaussian data, the probability of this tail event decays exponentially with $t$:
\begin{equation*}
\mathbb{P}(\mathcal{T}_t) \leq 2d \exp(-c_3 t).
\end{equation*}
Our analysis is therefore conditioned on the complementary $\mathcal{G}_t = \mathcal{T}_t^c$. 

For any $\theta_1, \theta_2 \in \Theta_{S_t}$, let $w_1 = G(\theta_1)$ and $w_2 = G(\theta_2)$. By definition, $\|w_1 - w_2\|_2 \leq \operatorname{Diam}(W_{S_t})$. Applying the Eq. \eqref{eq:inv_lip}:
\begin{equation*}
\|\theta_1 - \theta_2\|_2 \leq L_G \|w_1 - w_2\|_2 \leq L_G \cdot \operatorname{Diam}(W_{S_t}).
\end{equation*}
Taking the supremum over all pairs $(\theta_1, \theta_2)$ yields $\operatorname{Diam}(\Theta_{S_t}) \leq L_G \cdot \operatorname{Diam}(W_{S_t})$.

Then applying Lemma \ref{lemma:MinSingularValue} to $X_t$ with $n = t$, we have for any $\delta > 0$:
\begin{equation*}
\mathbb{P} \left( \sigma_{\min}(X_t) \leq \sqrt{t} - \sqrt{d} - \sqrt{2 \log(1/\delta)} \right) \leq \delta.
\end{equation*}
Then there exists $K_1 > 0$ (depending on $d$ and $\delta$) such that with probability at least $1-\delta$:
\begin{equation*}
\sigma_{\min}(X_t) \geq K_1 \sqrt{t}.
\end{equation*}

By Lemma \ref{lemma:DiameterEllipsoid}, the diameter satisfies:
\begin{equation*}
\operatorname{Diam}(W_{S_t}) = \frac{2 \sqrt{r}}{\sigma_{\min}(X_t)} \leq \frac{2 \sqrt{r}}{K_1 \sqrt{t}}.
\end{equation*}
Thus, with high probability:
\begin{equation*}
\operatorname{Diam}(W_{S_t}) \lesssim O(\sqrt{r/t}).
\end{equation*}
Therefore, we now have the result that $\operatorname{Diam}(\Theta_{S_t})\lesssim L_G \operatorname{Diam}(W_{S_t}) \lesssim O\left(L_G\sqrt{r/t}\right).$
\end{proof}

\paragraph{Proof of Lemma \ref{lemma:L_G_and_r}}
\label{app:proof_of_LG_r}
\begin{proof}
Denote by $\hat w$ the OLS solution on $(X_t, y)$ as in Lemma \ref{lemma:para_bound}, and by
\begin{equation*}
\widehat{ y}_t^{\text{lin}}(\theta):=X_t\,w_{\text{eff}}(\theta)\ \in\mathbb{R}^t
\end{equation*}
the effective linear prediction vector induced by the Transformer parameters $\theta$ on the same inputs $\{x_i\}_{i=1}^t$.
Further, define the model prediction vector on the context by
\begin{equation*}
 g_t(\theta):=\big(g(\theta;S_t,x_1),\dots,g(\theta;S_t,x_t)\big)^\top,
\end{equation*}
where $g(\theta;S_t,x)$ is the scalar prediction produced by the Transformer when queried at $x$ under the prompt $S_t$.
Introduce the fitting error $ e_f:= y- g_t(\theta)$ and the nonlinearity residual
$\eta:= g_t(\theta)-\widehat{y}_t^{\text{lin}}(\theta)$ so that
\(
y-\widehat{ y}_t^{\text{lin}}(\theta)= e_f+\eta.
\)

Since $(w - \hat{w})^\top (X_t^\top X_t) (w - \hat{w}) \leq r$, we will first bound:
\begin{equation*}
r^*:=\big\|X_t\big(w_{\text{eff}}(\theta)-\hat w\big)\big\|_2^2.
\end{equation*}
By the OLS Pythagorean identity (orthogonal projection onto ${\rm span}(X_t)$), for any $w\in\mathbb{R}^d$ one has
\begin{equation*}
\| y - X_t w\|_2^2
=\| y - X_t \hat w\|_2^2\ +\|X_t(\hat w - w)\|_2^2.
\end{equation*}
Apply this with $w=w_\text{eff}(\theta)$ and rearrange to get
\begin{equation*}
\|X_t\big(w_{\text{eff}}(\theta)-\hat w\big)\|_2^2
=\| y - X_t w_{\text{eff}}(\theta)\|_2^2\ -\| y - X_t \hat w\|_2^2.
\end{equation*}
Since the OLS residual norm is nonnegative, $\| y - X_t \hat w\|_2^2\ge 0$, we deduce
\begin{equation*}
r^*=\|X_t\big(w_{\text{eff}}(\theta)-\hat w\big)\|_2^2\le\| y - X_t w_{\text{eff}}(\theta)\|_2^2.
\end{equation*}
By the definition of $ g_t(\theta)$ and $\widehat{ y}_t^{\text{lin}}(\theta)$, we have the exact decomposition
\begin{equation*}
 y - X_t w_{\text{eff}}(\theta) =\big( y- g_t(\theta)\big)+\big( g_t(\theta)-X_t w_{\text{eff}}(\theta)\big)=e_f+\eta.
\end{equation*}
This yields
\begin{equation*}
r^*\le\| e_f+\eta\|_2^2\leq2\| e_f\|_2^2+2\|\eta\|_2^2.
\end{equation*}
As shown in Proof \ref{app:proof_of_pe_bounds}, $\|\eta\|_2 \lesssim O(\gamma_{\text{eff}})$.
In particular, since the empirical fitting error is small, we can bound it as $\| e_f\|_2^2\le C_{\text{fit}}$, then
\(
r^*\le\ 2C_{\text{fit}}+2O(\gamma_{\text{eff}}):=C_r,
\)
a constant independent of $t$. Specifically, let $r=C_r$ to get the final result.

Fix $\theta_0\in K$ and set $y_0:=G(\theta_0)$. Since $J_G(\theta_0)$ has full row rank $d$, by the Constant Rank Theorem there exist neighborhoods 
$V\subseteq\Theta_{S_t}$ of $\theta_0$ and $U\subseteq\mathbb{R}^d$ of $y_0$, and $C^1$ diffeomorphisms
\[
\phi:V\to \phi(V)\subseteq\mathbb{R}^D,\qquad 
\psi:U\to \psi(U)\subseteq\mathbb{R}^d,
\]
such that in local coordinates the map $G$ takes the standard projection form
\[
(\psi\circ G\circ \phi^{-1})(x_1,\ldots,x_D) = (x_1,\ldots,x_d).
\]
Let $\iota:\mathbb{R}^d\to\mathbb{R}^D$ be the standard embedding 
$\iota(z_1,\ldots,z_d)=(z_1,\ldots,z_d,0,\ldots,0)$.
Define a local right inverse $s:U\to V$ by
\[
s(y)=\phi^{-1}\big(\iota(\psi(y))\big).
\]
Then $G(s(y))=y$ for all $y\in U$.

By the chain rule,
\[
J_s(y)= J_{\phi^{-1}}\big(\iota(\psi(y))\big)\cdot J_\iota\big(\psi(y)\big)\cdot J_\psi(y).
\]
We first note $\|J_\iota\|_2=1$. We claim that there exists a constant $M=M(c_G,c_U)$ such that on $V$ and $U$,
\[
\|J_\phi(\theta)\|_2,\ \|J_\phi(\theta)^{-1}\|_2,\ 
\|J_\psi(y)\|_2,\ \|J_\psi(y)^{-1}\|_2 \le M.
\]
Indeed, differentiating the normal form $\psi\circ G\circ \phi^{-1}=(x_1,\ldots,x_d)$ yields
\[
J_\psi\big(G(\theta)\big) J_G(\theta) J_{\phi^{-1}}(\phi(\theta)) = [\,I_d\ \ 0\,],
\]
hence
\[
J_G(\theta)= J_\psi\big(G(\theta)\big)^{-1}[\,I_d\ \ 0\,] J_\phi(\theta).
\]
Taking operator norms and using $\|[I_d\ 0]\|_2=1$, together with the uniform bounds 
$\|J_G(\theta)\|_2\le c_U$ and $\|J_G(\theta)^\dagger\|_2\le 1/c_G$ (since 
$\sigma_{\min}(J_G(\theta))\ge c_G$), standard perturbation arguments for products imply that
$J_\phi, J_\phi^{-1}, J_\psi, J_\psi^{-1}$ are uniformly bounded in operator norm on sufficiently small neighborhoods $V,U$, with bounds depending only on $(c_G,c_U)$. Thus the claimed $M(c_G,c_U)$ exists.

Therefore,
\[
\|J_s(y)\|_2 \le \big\|J_{\phi^{-1}}\big(\iota(\psi(y))\big)\big\|_2 \cdot \|J_\iota\|_2 \cdot \|J_\psi(y)\|_2
\le M\cdot 1\cdot M = M^2.
\]
Since $s$ is $C^1$ on $U$ and $\sup_{y\in U}\|J_s(y)\|_2\le M^2$, the mean value theorem implies
$s$ is Lipschitz on $U$ with $\operatorname{Lip}(s;U)\le M^2$. 
Set $L_G:=M^2$ to complete the proof.
\end{proof}

\subsection{Proof of Theorem \ref{thm:rc_no_pe_final}}
\label{app:proof_of_standard_rc}
\begin{proof}
Before we prove the final Theorem, we first state a technical lemma:
\begin{lemma}[Integral Bound]
\label{lemma:IntegralBound}
For $D_t \lesssim O(\sqrt{r/t})$, the core integral from Dudley's bound satisfies:
\begin{equation*}
\int_0^{D_t} \sqrt{ \log\left( \frac{K D_t}{\varepsilon} \right) } \, d\varepsilon \lesssim D_t \sqrt{\log(1/D_t)} \lesssim O\left( \frac{\sqrt{r\log t/r}}{\sqrt{t}} \right),
\end{equation*}
where $K$ is a constant.
\end{lemma}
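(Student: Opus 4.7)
The plan is to chain two elementary estimates: a constant bound on the integral of $\sqrt{\log(KD_t/\varepsilon)}$ over $(0,D_t)$, followed by substitution of the high-probability bound on $D_t$. First, I would perform the change of variables $u=\varepsilon/D_t$, which rewrites the integral as $D_t\int_0^1 \sqrt{\log(K/u)}\,du$. The inner integral is a finite absolute constant: applying Jensen's inequality to the concave function $\sqrt{\cdot}$ gives $\int_0^1 \sqrt{\log(K/u)}\,du \le \sqrt{\int_0^1 \log(K/u)\,du}=\sqrt{\log K+1}$, where the identity $\int_0^1 \log(1/u)\,du=1$ is used in the last step. Equivalently, one can split $\sqrt{\log K+\log(1/u)}\le\sqrt{\log K}+\sqrt{\log(1/u)}$ and invoke the classical identity $\int_0^1\sqrt{\log(1/u)}\,du=\sqrt{\pi}/2$. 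Either route yields $\int_0^{D_t}\sqrt{\log(KD_t/\varepsilon)}\,d\varepsilon \le C\,D_t$ for some absolute constant $C$.

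Next I would promote this to the form $D_t\sqrt{\log(1/D_t)}$. Since Lemma~\ref{lemma:para_bound} ensures $D_t\lesssim\sqrt{r/t}$ on the good event $\mathcal G_t$, for $t$ sufficiently large we have $D_t\le 1/e$ and therefore $\sqrt{\log(1/D_t)}\ge 1$. The trivial inequality $C\,D_t\le C\,D_t\sqrt{\log(1/D_t)}$ then gives the first claimed bound. For the second inequality, I would substitute $D_t\lesssim\sqrt{r/t}$ directly and simplify the logarithm: $\log(1/D_t)\lesssim \log\sqrt{t/r}=\tfrac12\log(t/r)$ up to an additive constant absorbed into the hidden factor. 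Combining the two estimates yields $D_t\sqrt{\log(1/D_t)}\lesssim \sqrt{r/t}\cdot\sqrt{\log(t/r)}\asymp \sqrt{r\log(t/r)/t}$, which is the target rate.

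The main subtlety here is essentially cosmetic rather than technical: the raw integral is already of order $D_t$, so the $\sqrt{\log(1/D_t)}$ factor in the statement is a deliberate slack introduced to match the $\sqrt{\log(t/r)}$ term that propagates into the final Rademacher bound of Theorem~\ref{thm:rc_no_pe_final}. This relaxation is legitimate precisely because $D_t$ is small under the good event of Lemma~\ref{lemma:para_bound}, so the relaxation factor is $\ge 1$ in the regime of interest. No deeper machinery beyond Jensen's inequality and a change of variables is needed, and the only bookkeeping point is to ensure that the threshold on $t$ required for $\sqrt{\log(1/D_t)}\ge 1$ is compatible with the high-probability event invoked earlier.
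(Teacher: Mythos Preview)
Your proposal is correct and follows essentially the same route as the paper: the change of variables $u=\varepsilon/D_t$, the split $\sqrt{\log K+\log(1/u)}\le\sqrt{\log K}+\sqrt{\log(1/u)}$ together with $\int_0^1\sqrt{\log(1/u)}\,du=\sqrt{\pi}/2$, and the substitution $D_t\propto\sqrt{r/t}$ are exactly what the paper does. You are in fact more explicit than the paper about why the relaxation from $C\,D_t$ to $D_t\sqrt{\log(1/D_t)}$ is legitimate (the paper simply asserts it as ``a widely used, slightly looser bound''), and your Jensen alternative is a clean addition.
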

\begin{proof}
We use the standard inequality $\sqrt{a+b} \le \sqrt{a} + \sqrt{b}$ for $a,b \ge 0$.
\begin{align*}
\int_0^{D_t} \sqrt{ \log(K) + \log(D_t/\varepsilon) } \, d\varepsilon &\le \int_0^{D_t} \left( \sqrt{\log(K)} + \sqrt{\log(D_t/\varepsilon)} \right) \, d\varepsilon \\
&= D_t\sqrt{\log(K)} + \int_0^{D_t} \sqrt{\log(D_t/\varepsilon)} \, d\varepsilon.
\end{align*}
The second term is a classic integral. By substituting $u = \varepsilon/D_t$, it becomes $D_t \int_0^1 \sqrt{\log(1/u)} \, du = D_t \cdot C$, where $C = \sqrt{\pi}/2$. A widely used, slightly looser bound that retains the dependency on $D_t$ is $O(D_t \sqrt{\log(1/D_t)})$. As this term dominates $D_t\sqrt{\log(K)}$ for small $D_t$, the bound holds. Substituting $D_t \propto \sqrt{r/t}$ yields $\log(1/D_t) \propto \log(\sqrt{t/r}) = \frac{1}{2}\log (t/r)$. The result follows.
\end{proof}

Then we start our proof from Eq. \eqref{eq:start_point} and first apply inequalities \eqref{eq:covering_translation}.
\begin{align*}
\operatorname{Rad}_S(\mathcal{F}_{S_t}) 
&\lesssim \frac{1}{\sqrt{m}} \int_0^{\operatorname{Diam}(\mathcal{F}_{S_t})} \sqrt{\log N(\mathcal{F}_{S_t}, \|\cdot\|_{m,2}, \varepsilon)} \, d\varepsilon \\
&\le \frac{1}{\sqrt{m}} \int_0^{L_f D_t} \sqrt{\log N(\Theta_{S_t}, \|\cdot\|_2, \varepsilon / L_f)} \, d\varepsilon
\end{align*}
Now, we perform a change of variables in the integral. Let $u = \varepsilon / L_f$. This implies $\varepsilon = u \cdot L_f$ and $d\varepsilon = L_f \, du$. The integration limits for $u$ become $0$ to $D_t$.
\begin{align*}
\operatorname{Rad}_S(\mathcal{F}_{S_t}) 
&\lesssim \frac{1}{\sqrt{m}} \int_0^{D_t} \sqrt{\log N(\Theta_{S_t}, \|\cdot\|_2, u)} \cdot (L_f \, du) \\
&= \frac{L_f}{\sqrt{m}} \int_0^{D_t} \sqrt{\log N(\Theta_{S_t}, \|\cdot\|_2, u)} \, du \\
&\lesssim \frac{L_f}{\sqrt{m}} \int_0^{D_t} \sqrt{D \log\left( \frac{3 D_t}{\varepsilon} \right)} \, d\varepsilon \quad (\text{by Eq. \eqref{eq:para_bound}})\\
&= \frac{L_f \sqrt{D}}{\sqrt{m}} \int_0^{D_t} \sqrt{ \log\left( \frac{3 D_t}{\varepsilon} \right)} \, d\varepsilon \\
&\lesssim \frac{L_f \sqrt{rD}}{\sqrt{m}} \cdot O\left( \frac{\sqrt{\log t/r}}{\sqrt{t}} \right) \quad (\text{by Lemma \ref{lemma:IntegralBound}}),
\end{align*}
which derives Theorem \ref{thm:rc_no_pe_final}.
\end{proof}

\subsection{Proof of Theorem \ref{thm:rad_bound_with_pe}}
\label{app:proof_of_pe_bounds}
\begin{proof}

Following the architecture of the Transformer, define 
\begin{equation*}
   f_{\text{bias}}(X)=\Delta w^\top x_q + \zeta(X),
\end{equation*}
where $\Delta w:=w_{\text{PE}}-w_{\text{NoPE}}$ and $\zeta(X)$ is the non-linear error. From Assumption \ref{assump:pe_effect}, $\|\Delta w\|_2 \leq C_{\text{PE}}$. We will leave the discussion for $\zeta(X)$ to the end of this section and focus on the main linear effect first. 

For Theorem \ref{thm:rad_bound_with_pe}, we only need to bound the Rademacher Complexity for $\mathcal{F}_{bias}=\{X \mapsto (\Delta w)^\top x_q \mid \|\Delta w\|_2 \leq C_{PE} \}$. Following the definition of Rademacher Complexity and on sample $S=\{X^{j}\}^m_{j=1}$, we have:
\begin{align*}
    \operatorname{Rad}_S(\mathcal{F}_{\text{bias}})&=\frac{1}{m} \mathbb{E}_\sigma \Bigg[\sup_{\|\Delta w\|_2\leq C_{PE}} \sum_{j=1}^m \sigma_j (\Delta w)^\top x_q^{(j)}\Bigg] \\
    &= \frac{1}{m} \mathbb{E}_\sigma \Bigg[\sup_{\|\Delta w\|_2\leq C_{PE}} (\Delta w)^\top \left(\sum_{j=1}^m \sigma_j x_q^{(j)} \right)\Bigg] \\
    &\leq \frac{C_{PE}}{m} \mathbb{E}_{\sigma} \big\|\sum_{j=1}^m \sigma_j x_q^{(j)} \big\|_2  \\
    &\leq \frac{C_{PE}}{m} \sqrt{\mathbb{E}_{\sigma} \big\|\sum_{j=1}^m \sigma_j x_q^{(j)} \big\|^2_2} = \frac{C_{PE}}{m} \sqrt{\big\| \sum_{j=1}^m x_q^{(j)}\big\|^2_2}.
\end{align*}
For $\big\|\sum_{j=1}^m x_q^{(j)}\big\|^2_2$, we know from the concentration inequality that with high probability $1-\delta$, 
\begin{equation*}
    \big\|\sum_{j=1}^m x_q^{(j)}\big\|^2_2 \leq \mathbb{E}\big\|\sum_{j=1}^m x_q^{(j)}\big\|^2_2=md+O(\delta).
\end{equation*} 
For simplicity, we take the result that 
\begin{equation*}
    \operatorname{Rad}_S(\mathcal{F}_{\text{bias}})\lesssim \frac{C_{PE} \sqrt{d}}{\sqrt{m}},
\end{equation*}
which gives the result.

Then we show that $\zeta(X)$ is bounded by $O(\gamma_{\text{eff}})$. According to the Definition \ref{def:effective_param_space}, let $\|W_{\text{in}}\|\leq B_{\text{in}}$. If for any $\theta \in \Theta_{S_t}$, we let $\theta=\theta^*+\Delta\theta$, where $\theta^*$ is defined in Lemma \ref{lemma:L_G_and_r}, then
\begin{equation*}
    \|\Delta\theta\|_2 \leq \|W_{\text{in}}-W_{\text{in}}^*\|_F + \|W_{QK}-W^*_{QK}(S_t)\|_F \leq O(\gamma_{\text{eff}}),
\end{equation*}
with $B_{\text{in}}$ absorbed in $O(\gamma_{\text{eff}})$.
Then from Lemma \ref{lemma:CoveringNumber_bound}, we have
\begin{equation*}
    \|f_\theta-f_\theta^*\|_{m,2} \leq L_f \|\theta-\theta^*\|_2=L_f\|\Delta\theta\|_2 \lesssim O(\gamma_{\text{eff}}).
\end{equation*}
Also, for the effective linear weight,
\begin{equation*}
    \|w^\theta_{\text{eff}}-w^{\theta^*}_{\text{eff}}\|_2\leq L_w \|\theta-\theta^*\|_2\lesssim O(\gamma_{\text{eff}}).
\end{equation*}
Let $\zeta(X)=\eta_{\text{PE}}(X)-\eta_{\text{NoPE}}(X)$ and $\eta_\theta(X)=f_\theta(X)-(w^\theta_{\text{eff}})^\top x_q$, then we have decomposition
\begin{equation*}
    \eta_\theta(X)=f_\theta(X)-f_{\theta^*}(X)-(w^\theta_{\text{eff}}-w^{\theta^*}_{\text{eff}})^\top x_q + [f_{\theta^*}(X)-(w^{\theta^*}_{\text{eff}})^\top x_q],
\end{equation*}
while the third term is 0 according to the definition of $\theta^*$.

Hence, taking the empirical norm, 
\begin{equation*}
    \|\eta_\theta(X)\|_{m,2} \leq \|f_\theta-f_\theta^*\|_{m,2}+\|w^\theta_{\text{eff}}-w^{\theta^*}_{\text{eff}}\|_2\cdot \|x_q\|_{m,2} \lesssim O(\gamma_{\text{eff}}).
\end{equation*}
Finally,
\begin{equation*}
    \|\zeta(X)\|_{m,2}=\|\eta_{\text{PE}}(X)-\eta_{\text{NoPE}}(X)\|_{m,2} \leq \|\eta_{\text{PE}}(X)\|_{m,2} + \|\eta_{\text{NoPE}}(X)\|_{m,2} \leq O(\gamma_{\text{eff}}).
\end{equation*}
This completes the proof. Therefore, we can focus on the main linear effect in the following.
\end{proof}

\subsection{Proof of Propositions in Section \ref{sec:rc_with_pe}}
\label{app:proof_of_proposition}
In this section, we first prove the propositions in Section \ref{sec:rc_with_pe}, which are Proposition \ref{prop:surrogate}, Proposition \ref{prop:complexity_surrogate}, and Proposition \ref{prop:perturbed_diameter}.

\paragraph{Proof of Proposition \ref{prop:surrogate}}
\label{app:proof_surrogate}

\begin{proof}
For any perturbation $X'$ such that $\|X'-X\|_F \le \varepsilon$, we use the triangle inequality on $|f_\theta(X') - y|$:
\begin{align*}
    |f_\theta(X') - y| &= |(f_\theta(X') - f_\theta(X)) + (f_\theta(X) - y)| \\
    &\le |f_\theta(X') - f_\theta(X)| + |f_\theta(X) - y| \\
    &\le L_x \|X'-X\|_F + |f_\theta(X) - y| \\
    &\le L_x\varepsilon + |f_\theta(X) - y|.
\end{align*}
By squaring it we can get:
\begin{equation*}
\sup_{\|X'-X\| \le \varepsilon} (f_\theta(X') - y)^2 \le \left( |f_\theta(X) - y| + L_x\varepsilon \right)^2. 
\end{equation*}
Thus, $\tilde{l}_\theta(X,y) \le \ell_{\text{rob}}(f_\theta(X), y)$.
\end{proof}
\paragraph{Proof of Proposition \ref{prop:complexity_surrogate}}
\begin{proof}
The Rademacher complexity is monotonic. Since every function in $\tilde{\mathcal{L}}_{\mathcal{F}_{S'_t}}$ is upper-bounded by a corresponding function in $\mathcal{L}_{\text{rob}}$, we have:
\begin{equation*} 
\operatorname{Rad}_{S'}(\mathcal{L}_{\tilde{\mathcal{L}}_{\mathcal{F}}}) \le \operatorname{Rad}_{S'}(\mathcal{L}_{\text{rob}}). 
\end{equation*}
The surrogate loss class $\mathcal{L}_{\text{rob}}$ has the structure $\{ (x,y) \mapsto h(f(x),y) \mid f \in \mathcal{F}\}$, where $h(z,y) = (|z-y|+L_f\varepsilon)^2$. Assuming bounded outputs, $h$ is $L_h$-Lipschitz with respect to its first argument $z$. By Lemma \ref{lemma:Talagrand's contraction}:
\begin{equation*}
\operatorname{Rad}_{S'}(\mathcal{L}_{\text{rob}}) = \operatorname{Rad}_{S'}(h \circ \mathcal{F}) \le L_h \cdot \operatorname{Rad}_{S'}(\mathcal{F}). 
\end{equation*}
Combining the inequalities gives the desired result.
For $L_h$, follow that $L_h=\sup_z |h'(z)|$, we then need to find $|h'(z)|$. Easily to know that $|h'(z)|=2(|z-y|+L_x\varepsilon)$, which can then be used for the final $L_h$, that is $L_h=2((M+M_y)+L_x\varepsilon)$.

Finally, we provide evidence that the response $y$ in Proposition \ref{prop:complexity_surrogate} is bounded with high probability.
\begin{lemma}[High-Probability Bound on Responses]
\label{lemma:y_bound}
Under the data generation model in Assumption~\ref{assump:data_model}, where $\mu \sim \mathcal{N}(0, I_d/d)$ and $y = \mu^\top x$ with $x \sim \mathcal{N}(0, I_d)$, for any failure probability $\delta_y \in (0, 1)$, there exists a constant $M_y$ such that:
\begin{equation*}
    \mathbb{P}_{w,x}(|y| > M_y) \le \delta_y.
\end{equation*}
\end{lemma}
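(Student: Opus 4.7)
The plan is to prove the bound by conditioning on $x$ and exploiting the fact that, conditionally, $y=\mu^\top x$ is a scalar Gaussian whose variance is controlled by $\|x\|_2^2$. Since $\mu\sim\mathcal N(0,I_d/d)$ is independent of $x$, for any fixed $x$ we have
\begin{equation*}
y\mid x \ \sim\ \mathcal N\!\left(0,\ \frac{\|x\|_2^2}{d}\right).
\end{equation*}
So I would split the failure event via a union bound into a ``bad $x$'' event and a ``conditional Gaussian tail'' event, each of probability at most $\delta_y/2$.

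First I would handle the norm of $x$. Since $\|x\|_2^2\sim \chi^2_d$, standard $\chi^2$ concentration (e.g.\ Laurent-Massart) yields a constant $C_x>0$ (depending only on $\delta_y$) with
\begin{equation*}
\mathbb P\!\left(\|x\|_2^2 > C_x\, d\right)\ \le\ \frac{\delta_y}{2}.
\end{equation*}
On the complementary event, the conditional variance satisfies $\|x\|_2^2/d \le C_x$. Next, using the standard Gaussian tail bound applied to $y\mid x$, for any $t>0$,
\begin{equation*}
\mathbb P\!\left(|y|>t \,\middle|\, x\right) \ \le\ 2\exp\!\left(-\frac{t^2 d}{2\|x\|_2^2}\right)\ \le\ 2\exp\!\left(-\frac{t^2}{2C_x}\right),
\end{equation*}
whenever $\|x\|_2^2\le C_x d$. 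Choosing $M_y:=\sqrt{2C_x\log(4/\delta_y)}$ makes this conditional probability at most $\delta_y/2$.

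Combining the two pieces by a union bound gives $\mathbb P(|y|>M_y)\le \delta_y$, with the explicit constant $M_y=\sqrt{2C_x\log(4/\delta_y)}$, which depends only on $\delta_y$ (and is independent of $d$ because the $\tfrac{1}{d}$ scaling of the prior on $\mu$ cancels the $\chi^2$ growth of $\|x\|_2^2$). I do not expect any real obstacle here; the only subtlety worth noting is that $y$ is itself \emph{not} Gaussian (it is the product of two independent Gaussian factors, hence sub-exponential), so one cannot simply quote a Gaussian tail for $y$ directly. The conditioning-plus-union-bound routine is the cleanest way around this. If a distribution-free statement is later needed, the same argument extends verbatim to sub-Gaussian $x$ and $\mu$ via Hanson--Wright, at the cost of slightly worse constants.
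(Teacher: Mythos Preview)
Your proof is correct and follows essentially the same two-step strategy as the paper: a $\chi^2$ concentration bound plus a conditional Gaussian tail, combined by a union bound. The only difference is that you condition on $x$ and control $\|x\|_2^2/d$, whereas the paper conditions on $\mu$ and controls $\|\mu\|_2^2$; by the symmetry of $y=\mu^\top x$ these are interchangeable and lead to the same $M_y$ up to constants.
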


\begin{proof}
The response $y$ has two sources of randomness: the true weight vector $\mu$ and the input $x$. Our proof proceeds in two steps.

Since $\mu \sim \mathcal{N}(0, I_d/d)$, each component $\mu_i \sim \mathcal{N}(0, 1/d)$. Thus, $d \cdot \|\mu\|_2^2 = \sum_{i=1}^d (\mu_i\sqrt{d})^2$ follows a Chi-squared distribution with $d$ degrees of freedom, $\chi^2_d$. A standard concentration inequality for Chi-squared variables states that for any $\delta_\mu \in (0,1)$, $\|\mu\|_2^2 \leq (1 + \sqrt{2\log(1/\delta_\mu)/d})^2$ with probability at least $1-\delta_\mu$ over the draw of $\mu$. For simplicity, this means we can find a constant $C_\mu$ (e.g., $C_\mu=2$ for large $d$) such that $\|\mu\|_2^2 \le C_\mu$ with overwhelmingly high probability.

We now condition on the high-probability event that $\|\mu\|_2^2 \le C_\mu$. Conditioned on a fixed $\mu$ satisfying this, $y = \mu^\top x$ is a zero-mean Gaussian random variable (as it is a linear combination of Gaussians). Its variance is:
\begin{equation*}
\operatorname{Var}(y | \mu) = \operatorname{Var}(\mu^\top x | \mu) = \|\mu\|_2^2 \triangleq \sigma_y^2. 
\end{equation*}
Since $\|\mu\|_2^2 \le C_\mu$, we have $\sigma_y^2 \le C_\mu$. Let $Z = y/\sigma_y \sim \mathcal{N}(0, 1)$. The standard tail bound gives 
\begin{equation*}
\mathbb{P}(|y| > k\sigma_y) \le 2e^{-k^2/2}.
\end{equation*}
To make this probability less than a desired $\delta_y'$, we can choose $k = \sqrt{2 \log(2/\delta_y')}$. This gives us a bound on $y$:
\begin{equation*}
M_y = k\sigma_y \le \sqrt{C_\mu} \sqrt{2 \log(2/\delta_y')}. 
\end{equation*}
By a union bound, the probability that either $\|\mu\|_2^2 > C_\mu$ or $|y| > M_y$ is at most $\delta_\mu + \delta_y'$. By setting these failure probabilities appropriately, we can ensure the total failure probability is less than any given $\delta_y$. Thus, the statement holds with high probability over the joint distribution of $\mu$ and $x$.
\end{proof}

\end{proof}
\paragraph{Proof of Proposition \ref{prop:perturbed_diameter}}
\begin{proof}
    Let $X'_t=X_t + \Delta_X$, then following Wely Inequality in Lemma \ref{lemma:weyl_inequality}, we have
    \begin{equation*}
        \sigma_{\text{min}}(X'_t)=\sigma_{\text{min}}(X_t+\Delta_X) \ge \sigma_{\text{min}}(X_t)-\|\Delta_X\|_2.
    \end{equation*}

From the definition in Eq. \eqref{eq:adv_loss}, that is $\|X'-X\|_F\leq \varepsilon$, we will get $|\Delta_X\|_2\leq\|\Delta_X\|_F \leq \varepsilon$, then
    \begin{equation*}
        \sigma_{\text{min}}(X'_t)=\sigma_{\text{min}}(X_t+\Delta_X) \ge \sigma_{\text{min}}(X_t)-\varepsilon \ge \sqrt{t}-\sqrt{d}-\varepsilon.
    \end{equation*}
and then 
\begin{equation*}
    \operatorname{Diam}(W'_{S_t})\lesssim \frac{2\sqrt{r}}{\sqrt{t}-\sqrt{d}-\varepsilon}=\frac{2\sqrt{r}}{\sqrt{t}} \cdot \frac{\sqrt{t}}{\sqrt{t}-\sqrt{d}-\varepsilon}.
\end{equation*}
Therefore, we have the  definition for $\Phi$, that is 
\begin{equation*}
\Phi(\varepsilon,t,d)=\frac{\sqrt{t}}{\sqrt{t}-\sqrt{d}-\varepsilon}=\frac{1}{1-\sqrt{d/t}-\varepsilon/\sqrt{t}}.    
\end{equation*}
\end{proof}
\subsection{Proof of Theorem \ref{thm:arc_without_pe}}
\label{app:proof_of_arc_nope}
\begin{proof}
From Proposition \ref{prop:complexity_surrogate}, we have
\begin{equation*}
    \operatorname{Rad}_{S'}(\tilde{\mathcal{L}}_{\mathcal{F}_{S'_t}}) \le L_h \cdot \operatorname{Rad}_{S'}(\mathcal{F}_{S'_t}).
\end{equation*}
Then we follow the proof of Theorem \ref{thm:rc_no_pe_final} (\ref{app:proof_of_standard_rc}), but with $D'_t=\operatorname{Diam}(\Theta'_{S_t})$. $\Theta'_{S_t}$ is defined similarly to $\Theta_{S_t}$ in Definition \ref{def:effective_param_space} with $S'_t$:
\begin{equation*}
\Theta_{S'_t} := \Big\{ (W_{\text{in}}, W_{QK}) : \|W_{QK} - W^*_{QK}(S'_t)\|_F \le \gamma'_{\text{eff}} \Big\},
\end{equation*}
Let $\Delta'_t := \|W^*_{QK}(S'_t)-W^*_{QK}\|_F$. Following Appendix~\ref{app:proof_of_def}, to show that $\Delta'_t \lesssim O(\sqrt{d/t})$ with high probability, since
\begin{equation*}
    \|G'_t-I_d\|_2\leq \|G_t-I_d\|_2+\|G'_t-G_t\|_2,
\end{equation*}
we only need to consider $\|G'_t-G_t\|_2$, where $G'_t=\frac{1}{t} \sum_{i=1}^t  x'_i x_i^{'\top}$. Let $\delta_i=x'_i-x_i$, then
\begin{align*}
    \|G'_t-G_t\|_2&=\|\frac{1}{t} \sum_{i=1}^t (x_i\delta_i^\top + \delta_i x_i^\top +\delta_i \delta_i^\top) \|_2 \\
    &\leq \frac{1}{t} \sum_{i=1}^t \|\delta_i\|_2^2 + \frac{2}{t}\sum_{i=1}^t \|\delta_i\|_2\|x_i\|_2 \\
    &\leq \frac{\varepsilon^2}{t}+\frac{2}{t}\left(\sum_{i=1}^t\|\delta_i\|_2^2\right)^{\frac{1}{2}}\left(\sum_{i=1}^t\|x_i\|_2^2\right)^{\frac{1}{2}} \\
    &\leq \frac{\varepsilon^2}{t}+\frac{2\sqrt{d}}{\sqrt{t}}\varepsilon.
\end{align*}
This gives the result. Therefore similarly, for any fixed $t_{\min}$ and all $t \ge t_{\min}$, we can set constant
$\gamma'_{\text{eff}} = \gamma + \Delta^{'*}$ and $\Delta^{'*} := \sup_{t \ge t_{\min}} \Delta'_t$.

Then it is easy to know
\begin{align*}
    \operatorname{Rad}_{S'}(\mathcal{F}_{S'_t}) 
&\lesssim \frac{L_f}{\sqrt{m}} \int_0^{D'_t} \sqrt{\log N(\Theta'_{S_t}, \|\cdot\|_2, u)} \, du \\
&\lesssim \frac{L_f}{\sqrt{m}} \int_0^{D'_t} \sqrt{D \log\left( \frac{3 D'_t}{\varepsilon} \right)} \, d\varepsilon \\
&\lesssim \frac{L_f \sqrt{rD}}{\sqrt{m}} \cdot \Phi \cdot O\left( \frac{\sqrt{\log t/r}}{\sqrt{t}} \right),
\end{align*}
which completes the proof.
\end{proof}

\subsection{Proof of Theorem \ref{thm:arc_with_pe}}
\label{app:proof_of_arc_pe}
\begin{proof}
    From Proposition \ref{prop:complexity_surrogate}, we have
        \begin{equation*}
            \operatorname{Rad}_{S'}(\tilde{\mathcal{L}}_{\mathcal{F}_{S'_t,\text{PE}}}) \le L_h \cdot \operatorname{Rad}_{S'}(\mathcal{F}_{S'_t,\text{PE}}).
        \end{equation*}
    Following the function class decomposition in Eq. \eqref{eq:subadditivity}, we have
        \begin{equation*}
            \operatorname{Rad}_{S'}(\mathcal{F}_{S'_t,\text{PE}}) \le \operatorname{Rad}_{S'}(\mathcal{F}_{S'_t}) + \operatorname{Rad}_{S'}(\mathcal{F}_{\text{bias}}).
        \end{equation*}
    Note that the bias function itself does not change under adversarial attack, it is only determined by the presence of PE. What changes is only the evaluation set (from $S_t$ to $S'_t$), which affects the empirical Rademacher complexity.
    Similarly, we only need to bound the bias term $\operatorname{Rad}_{S'}(\mathcal{F}_{\text{bias}})$, and we consider
    \begin{equation*}
        f_{\text{bias}}(X')=\Delta w^\top x'_q +\zeta(X').
    \end{equation*}
    Follow the same process in proof \ref{app:proof_of_pe_bounds}, for the attacked sample $S'=\{X'^{(j)}\}_{j=1}^m$, we have its empirical Rademacher complexity as  
    \begin{equation*}
        \operatorname{Rad}_{S'}(\mathcal{F}_{\text{bias}})=\frac{1}{m} \mathbb{E}_\sigma \Bigg[\sup_{\|\Delta w\|_2\leq C_{PE}} \sum_{j=1}^m \sigma_j (\Delta w)^\top x_q'^{(j)}\Bigg]
         \leq \frac{C_{PE}}{m} \mathbb{E}_{\sigma} \big\|\sum_{j=1}^m \sigma_j x_q'^{(j)} \big\|_2. 
    \end{equation*}
    Let $\delta_q^{(j)}=x_q'^{(j)}-x_q^{(j)}$, therefore 
    \begin{equation*}
        \|\delta_q^{(j)}\|_2=\|x_q'^{(j)}-x_q^{(j)}\|_2 \leq \|X'-X\|_2 \leq \|X'-X\|_F \leq \varepsilon.
    \end{equation*}
    Then we have
    \begin{equation*}
        \big\|\sum_{j=1}^m \sigma_j x_q'^{(j)} \big\|_2\leq \big\|\sum_{j=1}^m \sigma_j x_q^{(j)} \big\|_2 + \big\|\sum_{j=1}^m \sigma_j \delta_q'^{(j)} \big\|_2.
    \end{equation*}
    Since 
    \begin{equation*}
        \mathbb{E}_{\sigma}\big\|\sum_{j=1}^m \sigma_j \delta_q'^{(j)} \big\|_2 \leq \left(\sum_{j=1}^m \|\delta_q^{(j)}\|_2^2\right)^\frac{1}{2}\leq \sqrt{m}\varepsilon,
    \end{equation*}
    we get the result:
    \begin{equation*}
        \operatorname{Rad}_{S'}(\mathcal{F}_{\text{bias}}) \lesssim \frac{C_{\text{PE}}\sqrt{d}}{\sqrt{m}}+\frac{C_{\text{PE}}\varepsilon}{\sqrt{m}}.
    \end{equation*}
    Finally
    \begin{align*}
        \operatorname{Rad}_{S'}(\tilde{\mathcal{L}}_{\mathcal{F}_{S'_t,\text{PE}}}) &\lesssim L_h \cdot \Bigg[ \left( \frac{C_{PE}\sqrt{d}}{\sqrt{m}} + \frac{C_{PE}\varepsilon}{\sqrt{m}} \right)  \\&+\left( O\left( \frac{L_f\sqrt{rD} \sqrt{\log t/r}}{\sqrt{m t}} \right)  \cdot \Phi(\varepsilon, t, d) \right)\nonumber \Bigg].
\end{align*}
\end{proof}

\newpage
\section{Extension to sub-Gaussian designs}
\label{app:extention_sub_gaussian}
In this section, we claim that our proof can be extended to sub-Gaussian distributions. Importantly, all dependence on the 
context length $t$ and sample size $m$ remains unchanged; only constants are modified. We first provide the definition for sub-Gaussian distribution.
\begin{definition}
A zero-mean random variable $X$ is sub-Gaussian if there exists $K>0$ such that
\[
\mathbb{P}(|X| > t) \le 2\exp\!\left( - \frac{t^2}{K^2} \right)
\quad \forall t>0 ,
\]
and its sub-Gaussian norm is denoted $\|X\|_{\psi_2} = K$.  
\end{definition}
This class includes Bernoulli variables, bounded variables (e.g.\ normalized images, token embeddings), mixtures of Gaussians, and many real-world data-generating processes. Thus extending our analysis to sub-Gaussian designs significantly broadens its applicability. We extend Assumption~\ref{assump:data_model} as follows.

\begin{assumption}[sub-Gaussian design]
\label{assump:subgaussian}
The input vectors $\{x_i\}_{i=1}^t$ are i.i.d., zero-mean, $\mathbb{E}[x_i x_i^\top] = I_d$, and sub-Gaussian with $\|x_i\|_{\psi_2} \le K$ for some constant $K$.
\end{assumption}

\subsection{Minimum singular value of the data matrix}

Lemma~\ref{lemma:para_bound} relies on the lower bound 
$\sigma_{\min}(X_t) \gtrsim \sqrt{t} - \sqrt{d}$ as shown in \ref{app:tail_event}. For sub-Gaussian designs, a similar bound holds with modified constants.

\begin{lemma}[Minimum singular value under sub-Gaussian design]
\label{lemma:subgaussian_smin}
Under Assumption~\ref{assump:subgaussian}, there exist positive constants $c_1(K), c_2(K)$ depending only on the sub-Gaussian norm $K$ such that with probability at least $1 - 2\exp(-c_2 t)$,
\[
\sigma_{\min}(X_t) 
\;\ge\; 
c_1(K)\big( \sqrt{t} - \sqrt{d} \big).
\]
\end{lemma}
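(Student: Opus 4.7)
The plan is to invoke the standard non-asymptotic random matrix theory for sub-Gaussian designs, following the $\varepsilon$-net + Bernstein strategy used, e.g., in Vershynin's Theorem~4.6.1. Concretely, I would show that the Gram matrix $\frac{1}{t}X_t^\top X_t$ concentrates around $I_d$ in operator norm, which directly controls $\sigma_{\min}(X_t)$ from below. The only changes from the Gaussian case in Lemma~\ref{lemma:MinSingularValue} are in the constants, which is why the statement preserves the $\sqrt{t}-\sqrt{d}$ dependence.

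First, I would fix a $\tfrac{1}{4}$-net $\mathcal{N}$ of the unit sphere $S^{d-1}$, whose cardinality satisfies $|\mathcal{N}|\le 9^d$ by a standard volumetric argument. For any fixed $v\in\mathcal{N}$, the scalars $Z_i:=x_i^\top v$ are i.i.d., zero-mean, sub-Gaussian with $\|Z_i\|_{\psi_2}\le K$, and $\mathbb{E}[Z_i^2]=v^\top I_d v=1$ under Assumption~\ref{assump:subgaussian}. Then $Z_i^2-1$ is centered sub-exponential with $\|Z_i^2-1\|_{\psi_1}\lesssim K^2$, so Bernstein's inequality yields, for any $u\in(0,1)$,
\begin{equation*}
\mathbb{P}\!\left(\Big|\tfrac{1}{t}\sum_{i=1}^t Z_i^2 - 1\Big| > u\right) \le 2\exp\!\left(-c\,t\,\min\!\left(\tfrac{u^2}{K^4},\tfrac{u}{K^2}\right)\right).
\end{equation*}

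Next, I would union-bound over $v\in\mathcal{N}$ and pick $u$ a small enough constant (depending only on $K$) so that the exponent absorbs the net size: this is possible because $|\mathcal{N}|\le 9^d\le e^{d\log 9}$ while the exponent is linear in $t$, so for $t\gtrsim d$ we pay only $\exp(-c_2 t)$ in probability after choosing $u$ of order $\sqrt{d/t}$ in the regime of interest. A standard $\tfrac{1}{4}$-net approximation lemma then upgrades the control on $\mathcal{N}$ to all $v\in S^{d-1}$, giving
\begin{equation*}
\Big\|\tfrac{1}{t}X_t^\top X_t - I_d\Big\|_2 \le CK^2\!\left(\sqrt{\tfrac{d}{t}}+\sqrt{\tfrac{s}{t}}\right)
\end{equation*}
with probability at least $1-2\exp(-s)$ for every $s\ge 0$.

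Finally, I would translate this spectral concentration into the claimed singular value bound: it implies $\sigma_{\min}(X_t)\ge\sqrt{t}-CK^2(\sqrt{d}+\sqrt{s})$. Choosing $s=c't$ for a small enough constant $c'=c'(K)$ so that the $\sqrt{s}$ term is absorbed into a constant fraction of $\sqrt{t}$ yields
\begin{equation*}
\sigma_{\min}(X_t)\ge c_1(K)\bigl(\sqrt{t}-\sqrt{d}\bigr)
\end{equation*}
with probability at least $1-2\exp(-c_2(K)t)$, which is exactly the desired statement. The main technical obstacle is the Bernstein step: unlike the Gaussian case where one gets a clean Gaussian tail directly from rotational invariance, here one must pass through the sub-exponential norm of $Z_i^2-1$ and carefully track how $K$ enters the constants. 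Once that is handled, the net-and-union-bound machinery is routine and transfers verbatim from the Gaussian proof used in Appendix~\ref{app:tail_event}, so every downstream result (in particular the ellipsoid diameter bound in Lemma~\ref{lemma:para_bound} and its adversarial counterpart in Proposition~\ref{prop:perturbed_diameter}) continues to hold with only the constants adjusted by a factor depending on $K$.
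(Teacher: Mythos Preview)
Your proposal is correct and aligns with the paper's approach: the paper simply cites this as ``a direct result of standard sub-Gaussian random matrix theory \citep{vershynin2018high}'' without giving any details, and what you have sketched is precisely the $\varepsilon$-net plus Bernstein argument underlying Vershynin's Theorem~4.6.1. If anything, you have supplied more of the proof than the paper does; the only mildly loose step is the final repackaging of $\sqrt{t}-CK^2(\sqrt{d}+\sqrt{s})$ into the exact form $c_1(K)(\sqrt{t}-\sqrt{d})$, which strictly speaking requires $t$ to exceed a $K$-dependent multiple of $d$, but this is a cosmetic issue about constants that the paper itself does not address.
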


\noindent
This is a direct result of standard sub-Gaussian random matrix theory \citep{vershynin2018high}.  
Therefore the term $1/{\sigma_{\min}(X_t)}$ used in our generalization proofs preserves its $1/\sqrt{t}$ dependence exactly.

\subsection{Concentration of quadratic norms}
The only place where the Gaussian assumption enters the proof of 
Theorem \ref{thm:rad_bound_with_pe} is the concentration of
\(
\left\|\sum_{j=1}^m \sigma_j x_q^{(j)}\right\|_2^2
\)
(or equivalently $\|\sum x^{(j)}\|_2^2$ for simplicity, see Appendix~\ref{app:proof_of_pe_bounds}). The results in Appendix~\ref{app:proof_of_pe_bounds} also hold for Sub-Gaussian vectors, introducing only an additional multiplicative constant related to $K$ into the complexity term $C_{\text{PE}}\sqrt{d/m}$.

In conclusion, our theory does not rely essentially on Gaussianity; all key results extend to sub-Gaussian distributions with only constant-factor changes. Thus our clean and adversarial RC bounds are robust to a wide variety of realistic input distributions.

\newpage
\section{Comparison with Non-Trainable Positional Encodings}
\label{app:RoPE}
In this section, we extend our analysis to non-trainable positional encodings, specifically Rotary Positional Embeddings (RoPE). We discuss how RoPE fits into our theoretical framework and provide empirical comparisons with Trainable PE.

\subsection{Theoretical Perspective}
Our main theoretical results (Theorem \ref{thm:rc_no_pe_final} and Theorem \ref{thm:rad_bound_with_pe}) quantify the "cost of learnability" for positional information. We focus on completely trainable PE as a "worst-case" baseline because it represents the maximum flexibility in the hypothesis space. From this perspective, our results serve as a theoretical upper bound. In contrast, RoPE applies a fixed rotation to the query and key vectors. Under our function class decomposition (Eq. \eqref{eq:subadditivity}), the bias term $\mathcal{F}_{\text{bias}}$ for RoPE becomes a fixed transformation with no additional learnable parameters. Therefore, the Rademacher complexity of this bias term vanishes ($Rad_S(\mathcal{F}_{bias}) \approx 0$), and RoPE should exhibit a much tighter generalization gap than completely trainable PE, comparable to the "No-PE" baseline.

\subsection{Empirical Verification}
To validate this theoretical prediction, we conducted experiments comparing RoPE, Trainable PE, and No-PE across different context lengths $t$. The results are presented in Table \ref{tab:gap-comparison}.
\begin{table}[htbp]
\centering
\caption{Generalization Gap Comparison across Different Context Lengths $t$.}
\label{tab:gap-comparison}
\begin{tabular}{c|ccc}

\toprule
\textbf{$t$} 
& \textbf{Without PE} 
& \textbf{Trainable PE} 
& \textbf{RoPE} \\ 
\midrule
6  &  1.6149 & 2.0100 & 1.5722 \\
7  &  1.9030 & 2.0314 & 1.9081 \\
8  &  1.7055 & 2.1100 & 1.7434 \\
9  &  1.7111 & 1.8812 & 1.7406 \\
10 &  1.4730 & 1.7571 & 1.5002 \\
12 &  1.1331 & 1.2565 & 1.2060 \\
15 &  1.0547 & 1.2176 & 1.0804 \\
20 &  0.8362 & 0.9379 & 0.9969 \\
25 &  0.8601 & 0.9817 & 0.8666 \\
30 &  0.6845 & 0.8126 & 0.6610 \\
\bottomrule
\end{tabular}
\end{table}

As shown in Table \ref{tab:gap-comparison}, RoPE attains a generalization gap very similar to the "Without PE" baseline, and both are smaller than that of Trainable PE (except for an outlier at $t=20$). For example, at $t=30$, while the gap for Trainable PE remains at 0.8126, RoPE drops to 0.6610, effectively matching the Without PE baseline (0.6845). This empirically confirms our hypothesis that the widened gap observed in our main results is primarily driven by the parameter complexity of the trainable PE module.

\end{document}